\newcommand{\removelatexerror}{\let\@latex@error\@gobble}
\newtheorem{lemma}{Lemma}
\newcommand{\y}{\textbf{y}}
\newcommand{\p}{\textbf{p}}
\newcommand{\rr}{\textbf{r}}
\newcommand{\W}{\textbf{W}}
\newcommand{\M}{\textbf{M}}
\newcommand{\m}{\textbf{m}}
\newcommand{\uu}{\textbf{u}}
\newcommand{\feng}{\textcolor{black}}
\title{Uncertainty Aware Semi-Supervised Learning on Graph Data}
\author{%
  Xujiang Zhao$^{1}$, Feng Chen$^{1}$, Shu Hu$^{2}$, Jin-Hee Cho$^{3}$ \\
  $^{1}$The University of Texas at Dallas, \texttt{\{xujiang.zhao, feng,chen\}@utdallas.edu} \\
  $^{2}$University at Buffalo, SUNY, \texttt{shuhu@buffalo.edu} \\
  $^{3}$Virginia Tech, \texttt{jicho@vt.edu} \\

}
\begin{document}

\maketitle
\begin{abstract}
Thanks to graph neural networks (GNNs), semi-supervised node classification has shown the state-of-the-art performance in graph data.  However, GNNs have not considered different types of uncertainties associated with class probabilities to minimize risk of increasing misclassification under uncertainty in real life. In this work, we propose a multi-source uncertainty framework using a GNN that reflects various types of predictive uncertainties in both deep learning and belief/evidence theory domains for node classification predictions. By collecting evidence from the given labels of training nodes, the \textit{Graph-based Kernel Dirichlet distribution Estimation} (GKDE) method is designed for accurately predicting node-level Dirichlet distributions and detecting out-of-distribution (OOD) nodes.  We validated the outperformance of our proposed model compared to the state-of-the-art counterparts in terms of misclassification detection and OOD detection based on six real network datasets. We found that dissonance-based detection yielded the best results on misclassification detection while vacuity-based detection was the best for OOD detection. To clarify the reasons behind the results, we provided the theoretical proof that explains the relationships between different types of uncertainties considered in this work.
\end{abstract}

\vspace{-2mm}
\section{Introduction}
\vspace{-2mm}

Inherent uncertainties derived from different root causes have realized as serious hurdles to find effective solutions for real world problems. Critical safety concerns have been brought due to lack of considering diverse causes of uncertainties, resulting in high risk due to misinterpretation of uncertainties (e.g., misdetection or misclassification of an object by an autonomous vehicle).  Graph neural networks (GNNs)~\cite{kipf2017semi, velickovic2018graph} have received tremendous attention in the data science community. Despite their superior performance in semi-supervised node classification and regression, they didn't consider various types of uncertainties in the their decision process.  Predictive uncertainty estimation~\cite{kendall2017uncertainties} using Bayesian NNs (BNNs) has been explored for classification prediction and regression in the computer vision applications, based on aleatoric uncertainty (AU) and epistemic uncertainty (EU). AU refers to data uncertainty from statistical randomness (e.g., inherent noises in observations) while EU indicates model uncertainty due to limited knowledge (e.g., ignorance) in collected data.  In the belief or evidence theory domain, Subjective Logic (SL)~\cite{josang2018uncertainty} considered vacuity (or a lack of evidence or ignorance) as uncertainty in a subjective opinion. Recently other uncertainty types, such as dissonance, consonance, vagueness, and monosonance~\cite{josang2018uncertainty}, have been discussed based on SL to measure them based on their different root causes. 

\vspace{-1mm}
We first considered multidimensional uncertainty types in both deep learning (DL) and belief and evidence theory domains for node-level classification, misclassification detection, and out-of-distribution (OOD) detection tasks.  By leveraging the learning capability of GNNs and considering multidimensional uncertainties, we propose a uncertainty-aware estimation framework by quantifying different uncertainty types associated with the predicted class probabilities.  
In this work, we made the following {\bf key contributions}:
\vspace{-2mm}
\begin{itemize}[leftmargin=*, noitemsep]
\item \textbf{A multi-source uncertainty framework for GNNs}. Our proposed framework first provides the estimation of various types of uncertainty from both DL and evidence/belief theory domains, such as dissonance (derived from conflicting evidence) and vacuity (derived from lack of evidence).  In addition, we designed a Graph-based Kernel Dirichlet distribution Estimation (GKDE) method to reduce errors in quantifying predictive uncertainties.
\item \textbf{Theoretical analysis}:  Our work is the first that provides a theoretical analysis about the relationships between different types of uncertainties considered in this work.  We demonstrate via a theoretical analysis that an OOD node may have a high predictive uncertainty under GKDE.
\item \textbf{Comprehensive experiments for validating the performance of our proposed framework}: Based on the six real graph datasets, we compared the performance of our proposed framework with that of other competitive counterparts. We found that the dissonance-based detection yielded the best results in misclassification detection while vacuity-based detection best performed in OOD detection. 
\end{itemize}
\vspace{-2mm}
Note that we use the term `predictive uncertainty' in order to mean uncertainty estimated to solve prediction problems.

\vspace{-2mm}
\section{Related Work} \label{sec:related-work}
\vspace{-2mm}
DL research has mainly considered {\it aleatoric} uncertainty (AU) and {\it epistemic} uncertainty (EU) using BNNs for computer vision applications.  AU consists of homoscedastic uncertainty (i.e., constant errors for different inputs) and heteroscedastic uncertainty (i.e., different errors for different inputs)~\cite{gal2016uncertainty}.  A Bayesian DL framework was presented to simultaneously estimate both AU and EU in regression (e.g., depth regression) and classification (e.g., semantic segmentation) tasks~\cite{kendall2017uncertainties}.  Later, {\em distributional uncertainty} was defined based on distributional mismatch between testing and training data distributions~\cite{malinin2018predictive}.  {\em Dropout variational inference}~\cite{gal2016dropout} was used for an approximate inference in BNNs using epistemic uncertainty, similar to \textit{DropEdge}~\cite{rong2019dropedge}.  Other algorithms have considered overall uncertainty in node classification~\cite{eswaran2017power, liu2020uncertainty, zhang2019bayesian}. However, no prior work has considered uncertainty decomposition in GNNs. 

In the belief (or evidence) theory domain, uncertainty reasoning has been substantially explored, such as Fuzzy Logic~\cite{de1995intelligent}, Dempster-Shafer Theory (DST)~\cite{sentz2002combination}, or Subjective Logic (SL)~\cite{josang2016subjective}.  Belief theory focuses on reasoning inherent uncertainty in information caused by unreliable, incomplete, deceptive, or conflicting evidence.  SL considered predictive uncertainty in subjective opinions in terms of {\em vacuity} (i.e., a lack of evidence) and {\em vagueness} (i.e., failing in discriminating a belief state)~\cite{josang2016subjective}. Recently, other uncertainty types have been studied, such as {\em dissonance} caused by conflicting evidence\cite{josang2018uncertainty}.
In the deep NNs, \cite{sensoy2018evidential} proposed evidential deep learning (EDL) model, using SL to train a deterministic NN for supervised classification in computer vision based on the sum of squared loss.  However, EDL didn't consider a general method of estimating multidimensional uncertainty or graph structure. 

\vspace{-2mm}
\section{Multidimensional Uncertainty and Subjective Logic}
\vspace{-2mm}

This section provides an overview of SL and discusses multiple types of uncertainties estimated based on SL, called {\em evidential uncertainty}, with the measures of \textit{vacuity} and \textit{dissonance}.  In addition, we give a brief overview of {\em probabilistic uncertainty}, discussing the measures of \textit{aleatoric} uncertainty and \textit{epistemic} uncertainty.

\vspace{-2mm}
\subsection{Subjective Logic}\label{SL}
\vspace{-2mm}

A multinomial opinion of a random variable $y$ is represented by $\omega = (\bm{b}, u, \bm{a})$ where a domain is $\mathbb{Y} \equiv \{1, \cdots, K\}$ and the additivity requirement of $\omega$ is given as $\sum_{k \in \mathbb{Y}} b_k + u = 1$.  To be specific, each parameter indicates,
\begin{itemize}
\item $\bm{b}$: {\em belief mass distribution} over $\mathbb{Y}$ and $\bm{b} = [b_1, \ldots, b_K]^T$;
\item $u$: {\em uncertainty mass} representing {\em vacuity of evidence};
\item $\bm{a}$: {\em base rate distribution} over $\mathbb{Y}$ and $\bm{a} = [a_1, \ldots, a_K]^T$.
\end{itemize}
The projected probability distribution of a multinomial opinion can be calculated as:
\begin{equation} \label{eq:multinomial-projected}
P(y=k) = b_k + a_k u,\;\;\; \forall k \in \mathbb{Y}. 
\end{equation}  

A multinomial opinion $\omega$ defined above can be equivalently represented by a $K$-dimensional Dirichlet probability density function (PDF), where the special case with $K=2$ is the Beta PDF as a binomial opinion. 
%Multinomial probability density over a domain of cardinality $K$ is represented by the $K$-dimensional Dirichlet PDF where the special case with $K=2$ is the Beta PDF as a binomial opinion. 
%Denote a domain of $K$ mutually disjoint elements in $\mathbb{Y}$ and 
Let ${\bm \alpha}$ be a strength vector over the singletons (or classes) in $\mathbb{Y}$ and ${\bf p} = [p_1, \cdots, p_K]^T$ be a probability distribution over $\mathbb{Y}$. The Dirichlet PDF 
%of a Dirichlet: $ \sim \text{Dir}({\bf p}; {\bm \alpha})$
with ${\bf p}$ as a random vector $K$-dimensional variables is defined by:
\begin{eqnarray} \label{eq:multinomial-dir}
\mathrm{Dir}(\bm{p}| {\bm \alpha}) = \frac{1}{B({\bm \alpha})} \prod\nolimits_{k\in \mathbb{Y}} p_k ^{(\alpha_k-1)},
\end{eqnarray} 
where $\frac{1}{B({\bm \alpha})} = \frac{\Gamma (\sum_{k \in \mathbb{Y}} \alpha_k)}{\prod_{k \in \mathbb{Y}} (\alpha_k)}$, $\alpha_k \geq 0$, and $p_k \neq 0$, if $\alpha_k < 1$.

The term \textit{evidence} is introduced as a measure of the amount of supporting observations collected from data that a sample should be classified 
%in favor of a sample to be classified 
into a certain class. Let $e_k$ be the evidence derived for the class $k\in \mathbb{Y}$.  The total strength $\alpha_k$ for the  belief of each class $k \in \mathbb{Y}$ can be calculated as: 
$\alpha_k = e_k + a_k W$, 
where $e_k \geq 0, \forall k \in \mathbb{Y}$, and $W$ refers to a non-informative weight representing the amount of uncertain evidence.  Given the Dirichlet PDF as defined above, the expected probability distribution over $\mathbb{Y}$ can be calculated as:
\begin{equation} \label{eq:multinomial-expected}
\mathbb{E}[p_k] = \frac{\alpha_k}{\sum_{k=1}^K \alpha_k} = \frac{e_k+a_k W}{W+\sum_{k=1}^K e_k}. 
\end{equation}
The observed evidence in a Dirichlet PDF can be mapped to a multinomial opinion as follows:
\begin{equation} \label{eq:multinomial-belief}
b_k = \frac{e_k}{S}, \;
u = \frac{W}{S},  
\end{equation}
where $S = \sum_{k=1}^K \alpha_k$ refers to the Dirichlet strength.
Without loss of generality, we set  $a_k = \frac{1}{K}$ and the non-informative prior weight (i.e., $W = K$), which indicates that  $a_k \cdot  W = 1$ for each $k \in \mathbb{Y}$.

\vspace{-1mm}
\subsection{Evidential Uncertainty}
\vspace{-1mm}

In~\cite{josang2018uncertainty}, we discussed a number of multidimensional uncertainty dimensions of a subjective opinion based on the formalism of SL, such as singularity, vagueness, vacuity, dissonance, consonance, and monosonance.  These uncertainty dimensions can be observed from binomial, multinomial, or hyper opinions depending on their characteristics (e.g., the vagueness uncertainty is only observed in hyper opinions to deal with composite beliefs). In this paper, we discuss two main uncertainty types that can be estimated in a multinomial opinion, which are {\em vacuity} and {\em dissonance}. 

The main cause of vacuity is derived from a lack of evidence or knowledge, which corresponds to the uncertainty mass, $u$, of a multinomial opinion in SL as:
$vac(\omega) \equiv u = K/S,$ as estimated in Eq.~(\ref{eq:multinomial-belief}).
This uncertainty exists because the analyst may have  insufficient information or knowledge to analyze the uncertainty. The {\em dissonance} of a multinomial opinion can be derived from the same amount of conflicting evidence and can be estimated based on the difference between singleton belief masses (e.g., class labels), which leads to `inconclusiveness' in decision making applications. For example, a four-state multinomial opinion is given as $(b_1, b_2, b_3, b_4, u, a) = (0.25, 0.25, 0.25, 0.25, 0.0, a)$ based on Eq.~\eqref{eq:multinomial-belief}, although the vacuity $u$ is zero, a decision can not be made if there are the same amounts of beliefs supporting respective beliefs.  Given a multinomial opinion with non-zero belief masses, the measure of dissonance can be calculated as:
% \begin{equation}
% \label{eq:belief-dissonance-multi}
% diss(\omega) = \sum\limits_{y_{i}\in \mathbb{Y}}\left(\frac{\bm{b}_{Y}(y_{i})\!\!\!\! \sum\limits_{y_j \in \mathbb{Y}\setminus y_i}\!\!\!\!\!\bm{b}_{Y}(y_{j}) \mbox{Bal}(y_{j},y_{i})}{\sum\limits_{y_j \in \mathbb{Y}\setminus y_i}\bm{b}_{Y}(y_{j})}  \right), \mbox{Bal}(y_{j},y_{i}) = 1 - \frac{|\bm{b}_{Y}(y_{j})-\bm{b}_{Y}(y_{i})|}{\bm{b}_{Y}(y_{j})+\bm{b}_{Y}(y_{i})} \nonumber
% \end{equation}
\begin{align}
\label{eq:dis}
    diss(\omega)=\sum_{i=1}^{K}\Big(\frac{b_i \sum_{j\neq i}b_j \text{Bal}(b_j,b_i)}{\sum_{j\neq i}b_j}\Big), 
\end{align}
where the relative mass balance between a pair of belief masses $b_j$ and $b_i$ is defined as $\mbox{Bal}(b_j, b_i) = 1 - |b_j - b_i|/(b_j + b_i)$. We note that  the dissonance is measured only when the belief mass is non-zero. If all belief masses equal to zero with vacuity being 1 (i.e., $u=1$), the dissonance will be set to zero.

\vspace{-1mm}
\subsection{Probabilistic Uncertainty}   
\label{sect:multi-dim uncertainty}
\vspace{-1mm}
For classification, the estimation of the probabilistic uncertainty relies on the design of an appropriate Bayesian DL model with parameters $\bm{\theta}$. Given input $x$ and dataset $\mathcal{G}$, we estimate a class probability by $P(y|x) = \int P(y|x;\bm{\theta}) P(\bm{\theta}|\mathcal{G}) d\bm{\theta}$, and obtain \textbf{\textit{epistemic uncertainty}} estimated by mutual information~\cite{depeweg2018decomposition, malinin2018predictive}:
\begin{eqnarray}
\footnotesize
\vspace{-2mm}
\underbrace{I(y, \bm{\theta}|x, \mathcal{G})}_{\text{\textbf{\textit{Epistemic}}}} =\underbrace{\mathcal{H}\big[ \mathbb{E}_{P(\bm{\theta}|\mathcal{G})}[P(y|x;\bm{\theta})] \big]}_{\text{\textbf{\textit{Entropy}}}} -  \underbrace{\mathbb{E}_{P(\bm{\theta}|\mathcal{G})}\big[\mathcal{H}[P(y|x;\bm{\theta})] \big]}_{\text{\textbf{\textit{Aleatoric}}}}, 
\vspace{-2mm} \label{eq:epistemic}
\end{eqnarray}
where $\mathcal{H}(\cdot)$ is Shannon's entropy of a probability distribution. The first term indicates {\bf \textit{entropy}} that represents the total uncertainty while the second term is {\bf \textit{aleatoric}} that indicates data uncertainty.  By computing the difference between entropy and aleatoric uncertainties, we obtain epistemic uncertainty, which refers to uncertainty from model parameters. 

\section{Relationships Between Multiple Uncertainties}
\begin{wrapfigure}{R}{0.45\textwidth}
  \vspace{-5mm} 
  \centering
  \includegraphics[width=0.42\textwidth]{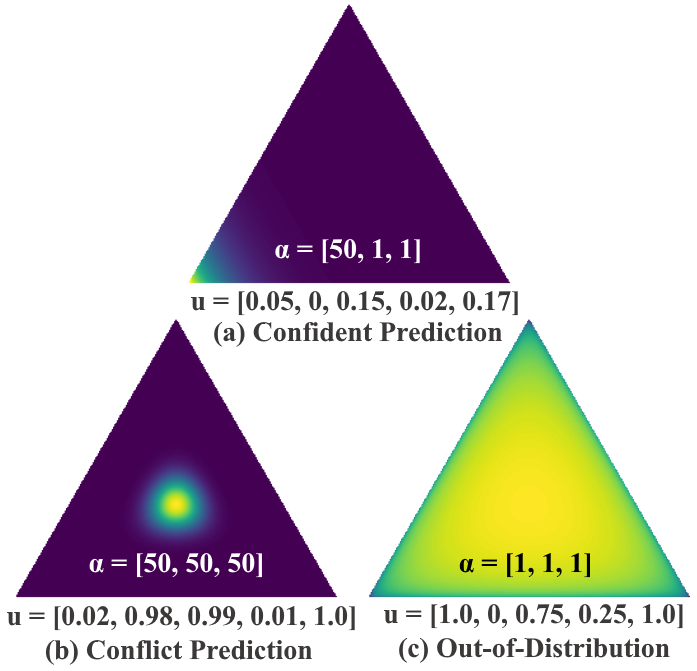}
  \vspace{-2mm}
  \caption{\footnotesize{Multiple uncertainties of different prediction. Let ${\bf u}=[u_v, u_{diss}, u_{alea}, u_{epis}, u_{en}]$.\vspace{-5mm}}}\label{fig:example}
  \vspace{-1mm}
\end{wrapfigure}
We use the shorthand notations $u_{v}$, $u_{diss}$, $u_{alea}$, $u_{epis}$, and $u_{en}$ to represent vacuity, dissonance, aleatoric, epistemic, and entropy, respectively.  

To interpret multiple types of uncertainty, we show three prediction scenarios of 3-class classification in Figure~\ref{fig:example}, in each of which the strength parameters $\alpha = [\alpha_1, \alpha_2, \alpha_3]$ are known.  To make a prediction with high confidence, the subjective multinomial opinion, following a Dirichlet distribution, will yield a sharp distribution on one corner of the simplex (see Figure~\ref{fig:example} (a)). For a prediction with conflicting evidence, called a conflicting prediction (CP), the multinomial opinion should yield a central distribution, representing confidence to predict a flat categorical distribution over class labels (see Figure~\ref{fig:example} (b)).  For an OOD scenario with $\alpha=[1, 1, 1]$, the multinomial opinion would yield a flat distribution over the simplex (Figure~\ref{fig:example} (c)), indicating high uncertainty due to the lack of evidence.  The first technical contribution of this work is as follows.

\begin{restatable}{theorem}{primetheorem}
We consider a simplified scenario, where a multinomial random variable $y$ follows a K-class categorical distribution: $y \sim \text{Cal}(\p)$, the class probabilities $\p$ follow a Dirichlet distribution: $\p\sim \text{Dir}({\bm \alpha})$, and ${\bm \alpha}$ refer to the Dirichlet parameters. Given a total Dirichlet strength $S=\sum_{i=1}^K \alpha_i$, 
%\sout{where $K$ is the number of classes,} 
for any opinion $\omega$ on a multinomial random variable $y$, we have
\vspace{-2mm}
\begin{enumerate}
\item General relations on all prediction scenarios. 
    
(a) $u_v+ u_{diss} \le 1$; (b) $u_v > u_{epis}$.
    
    % \begin{enumerate}
    %     \item $u_v+ u_{diss} \le 1$;
    %     \item $u_v > u_{epis}$
    % \end{enumerate}
    \item Special relations on the OOD and the CP.
    \begin{enumerate}
        \item For an OOD sample with a uniform prediction (i.e., $\alpha=[1, \ldots, 1]$), we have 
    \begin{eqnarray}
    1= u_v = u_{en}>  u_{alea} > u_{epis} > u_{diss} = 0 \nonumber
    \end{eqnarray}
        \item For an in-distribution sample with a conflicting prediction (i.e., $\alpha=[\alpha_1, \ldots, \alpha_K]$ with $\alpha_1 = \alpha_2 =\cdots = \alpha_K$, if $S \rightarrow \infty$),  we have 
    \begin{eqnarray}
    u_{en} = 1, \lim_{S\rightarrow \infty} u_{diss} =\lim_{S\rightarrow \infty} u_{alea} =1 , \lim_{S\rightarrow \infty} u_{v} =\lim_{S\rightarrow \infty} u_{epis} =0 \nonumber
    \end{eqnarray}
    \text{with} $u_{en} > u_{alea}> u_{diss}> u_{v}>u_{epis} $.
    \end{enumerate}
\end{enumerate}
\label{theorem1}
\end{restatable}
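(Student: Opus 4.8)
The plan is to first reduce all five quantities to explicit closed forms in $\boldsymbol{\alpha}$ and $S$, and then treat each inequality as a calculus exercise governed by two standard special-function bounds. From $\alpha_k = e_k + a_k W$ with $a_k = 1/K$, $W = K$ we get $e_k = \alpha_k - 1$, so $b_k = (\alpha_k-1)/S$, $u_v = K/S$, and $\sum_k b_k = 1 - u_v$. Taking $P(y=k)=\mathbb{E}[p_k]=\alpha_k/S$ and measuring entropy in base $K$ (so that the uniform prediction gives $u_{en}=1$), the probabilistic quantities become $u_{en} = -\frac{1}{\ln K}\sum_k \frac{\alpha_k}{S}\ln\frac{\alpha_k}{S}$ and, using the Dirichlet moment identity $\mathbb{E}[p_k\ln p_k]=\frac{\alpha_k}{S}\big(\psi(\alpha_k+1)-\psi(S+1)\big)$, $u_{alea} = -\frac{1}{\ln K}\sum_k \frac{\alpha_k}{S}\big(\psi(\alpha_k+1)-\psi(S+1)\big)$, with $u_{epis}=u_{en}-u_{alea}$; here $\psi$ is the digamma function.

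With these forms in hand, part 1(a) is immediate: since $\mathrm{Bal}(b_j,b_k)\in[0,1]$, each summand of $diss(\omega)$ is at most $b_k$, hence $u_{diss}\le\sum_k b_k = 1 - u_v$. For part 1(b) I would substitute the digamma forms to obtain $u_{epis}=\frac{1}{\ln K}\big(\sum_k \frac{\alpha_k}{S}g(\alpha_k)-g(S)\big)$, where $g(x):=\psi(x+1)-\ln x$, and invoke the sandwich $0<g(x)<\frac{1}{2x}$ (from $\ln x-\frac1x<\psi(x)<\ln x-\frac{1}{2x}$ together with $\psi(x+1)=\psi(x)+\frac1x$). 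This yields $u_{epis}<\frac{1}{\ln K}\cdot\frac{K}{2S}=\frac{K}{2S\ln K}<\frac{K}{S}=u_v$, the last step holding because $\ln K>1/2$ for every $K\ge 2$.

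For the special cases I would simply evaluate these formulas. In the OOD case $\boldsymbol{\alpha}=\mathbf{1}$ gives $S=K$, all $b_k=0$ so $u_{diss}=0$, $u_v=1$, a uniform mean so $u_{en}=1$, and $u_{alea}=\frac{1}{\ln K}(\psi(K+1)-\psi(2))=\frac{H_K-1}{\ln K}$ via $\psi(K+1)-\psi(2)=H_K-1$. The required chain $u_{alea}>u_{epis}=1-u_{alea}>0$ then reduces to $\tfrac12<u_{alea}<1$, i.e.\ $1+\tfrac12\ln K<H_K<1+\ln K$, which are standard harmonic-number bounds. In the conflicting case $\alpha_k=S/K$ gives $u_{en}=1$ and $u_v=K/S\to 0$; the $b_k$ are all equal so every $\mathrm{Bal}=1$ and $u_{diss}=\sum_k b_k=1-K/S\to 1$; and $u_{alea}=\frac{1}{\ln K}(\psi(S+1)-\psi(S/K+1))\to 1$, whence $u_{epis}=1-u_{alea}\to 0$. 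For the strict ordering $u_{en}>u_{alea}>u_{diss}>u_v>u_{epis}$ at large $S$ I would expand $\psi(x+1)=\ln x+\frac{1}{2x}+O(x^{-2})$ to get $u_{alea}=1-\frac{K-1}{2S\ln K}+O(S^{-2})$ and compare $1/S$ coefficients: $u_{alea}-u_{diss}=\frac1S\big(K-\frac{K-1}{2\ln K}\big)+O(S^{-2})>0$, $u_{diss}-u_v=1-\frac{2K}{S}>0$ for $S>2K$, and $u_v>u_{epis}$ from part 1(b).

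The routine content is 1(a) and the limiting values in part 2; the genuine work is concentrated in the strict inequalities, all of which rest on sharp, uniform-in-$K$ control of $g(x)=\psi(x+1)-\ln x$ and of $H_K$ against $\ln K$. I expect the main obstacle to be making 1(b) and the $1/S$-coefficient comparison in 2(b) rigorous for every $K\ge 2$ simultaneously, since the constants $\frac{1}{2\ln K}$ and $\frac{K-1}{2\ln K}$ are tightest at the small-$K$ boundary and must be verified there rather than only asymptotically.
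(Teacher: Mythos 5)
Your proposal is correct, and its load-bearing parts take a genuinely different route from the paper's. Part 1(a) is identical (bound $\mathrm{Bal}\le 1$, sum the beliefs). For 1(b), the paper does not use digamma bounds on the mutual information directly: it introduces the \emph{expected pairwise KL-divergence} of the Dirichlet ensemble as an intermediate quantity, computes it in closed form as $(K-1)/S$, and sandwiches $u_{epis}\le (K-1)/S < K/S = u_v$; you instead work with the explicit digamma representation $u_{epis}=\frac{1}{\ln K}\big(\sum_k \frac{\alpha_k}{S}g(\alpha_k)-g(S)\big)$, $g(x)=\psi(x+1)-\ln x$, and the standard sandwich $0<g(x)<\frac{1}{2x}$, which is more self-contained and yields the explicit rate $u_{epis}<\frac{K}{2S\ln K}$. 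For 2(a), the paper proves that $u_{alea}=\frac{1}{\ln K}\sum_{k=2}^K\frac1k$ is monotonically increasing in $K$ via a separate induction lemma, then evaluates the extremes at $K=2$; your reduction to $1+\tfrac12\ln K < H_K < 1+\ln K$ is more direct, and the lower bound (the only nonstandard piece) does hold uniformly: $\sum_{k=2}^K\frac1k>\ln\frac{K+1}{2}\ge\tfrac12\ln K$ since $(K-1)^2\ge 0$, so your flagged small-$K$ concern resolves cleanly. For 2(b), the paper's strict-ordering argument proceeds through a chain of algebraic manipulations to show $u_{alea}>\tfrac23>1/C=u_v$ (a chain that is in places garbled in the paper) and then asserts $u_{alea}>u_{diss}$; your $1/S$-coefficient comparison via $\psi(x+1)=\ln x+\frac{1}{2x}+O(x^{-2})$ is a valid and cleaner substitute, and is adequate since the theorem only claims the ordering in the $S\to\infty$ regime. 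One simplification you could borrow in spirit from the paper: once $u_{en}=1$, the inequality $u_{alea}>u_{diss}$ follows immediately from your own parts 1(a) and 1(b), since $u_{diss}\le 1-u_v<1-u_{epis}=u_{alea}$, which avoids the asymptotic expansion entirely for that step and holds for every finite $S$, not just asymptotically.
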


\vspace{-2mm}
The proof of Theorem~\ref{theorem1} can be found in Appendix A.1.  As demonstrated in Theorem~\ref{theorem1} and Figure~\ref{fig:example}, entropy cannot distinguish OOD (see Figure~\ref{fig:example} (c)) and conflicting predictions (see Figure~\ref{fig:example} (b)) because entropy is high for both cases. Similarly, neither aleatoric uncertainty nor epistemic uncertainty can distinguish OOD from conflicting predictions.  In both cases, aleatoric uncertainty is high while epistemic uncertainty is low.  On the other hand, vacuity and dissonance can clearly distinguish OOD from a conflicting prediction.  For example, OOD objects typically show high vacuity with low dissonance while conflicting predictions exhibit low vacuity with high dissonance.  This observation is confirmed through the empirical validation via our extensive experiments in terms of misclassification and OOD detection tasks.

\vspace{-2mm}
\section{Uncertainty-Aware Semi-Supervised Learning}
\vspace{-2mm}
In this section, we describe our proposed uncertainty framework based on semi-supervised node classification problem. It is designed to predict the subjective opinions about the classification of testing nodes, such that a variety of uncertainty types, such as vacuity, dissonance, aleatoric uncertainty, and epistemic uncertainty, can be quantified based on the estimated subjective opinions and posterior of model parameters. As a subjective opinion can be equivalently represented by a Dirichlet distribution about the class probabilities, we proposed a way to predict the node-level subjective opinions in the form of node-level Dirichlet distributions. The overall description of the framework is shown in Figure~\ref{fig:framework}.

\vspace{-2mm}
\subsection{Problem Definition} \label{subsec:problem-definition}
\vspace{-1mm}
Given an input graph $\mathcal{G} = (\mathbb{V}, \mathbb{E}, {\bf r}, {\bf y}_\mathbb{L})$, where $\mathbb{V} = \{1, \ldots, N \}$ is a ground set of nodes, $\mathbb{E} \subseteq \mathbb{V}\times \mathbb{V}$ is a ground set of edges, $\rr = [\rr_1, \cdots, \rr_N]^T \in \mathbb{R}^{N\times d}$ is a node-level feature matrix, $\rr_i\in \mathbb{R}^d$ is the feature vector of node $i$, $\y_{\mathbb{L}}=\{y_i \mid i \in \mathbb{L}\}$ are the labels of the training nodes $\mathbb{L} \subset \mathbb{V}$, and $y_i \in \{1, \ldots, K\}$ is the class label of node $i$. {\bf We aim to predict}: (1) the \textbf{class probabilities} of the testing nodes: $\p_{\mathbb{V} \setminus \mathbb{L}} = \{\p_i \in [0, 1]^K \mid i \in \mathbb{V} \setminus \mathbb{L}\}$; and (2) the \textbf{associated multidimensional uncertainty estimates} introduced by different root causes: $\mathbf{u}_{\mathbb{V} \setminus \mathbb{L}} = \{\mathbf{u}_i \in [0, 1]^m \mid i \in \mathbb{V} \setminus \mathbb{L}\}$, where $p_{i, k}$ is the probability that the class label $y_i = k$ and $m$ is the total number of
uncertainty types. 

\begin{figure*}[t!]
  \centering
  \vspace{-1mm}
  \includegraphics[width=0.95\linewidth]{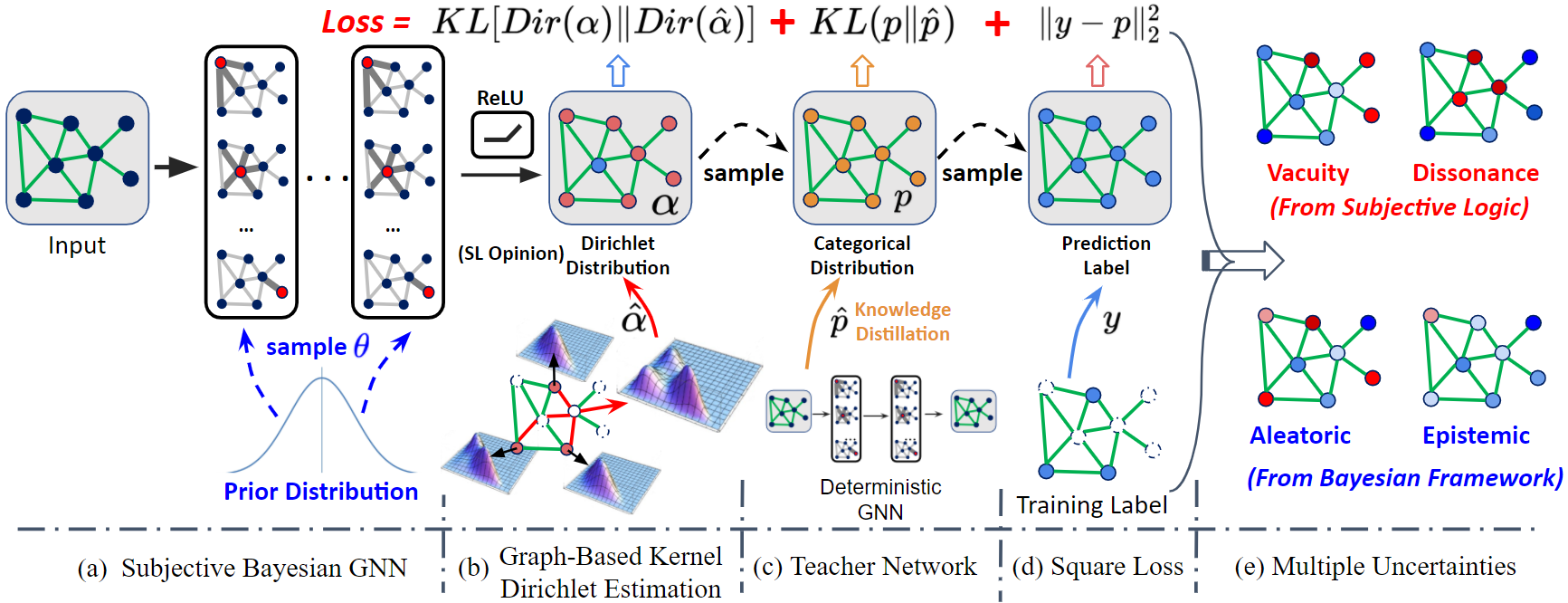}
  \vspace{-2mm}
  \scriptsize{
 \caption{\footnotesize \textbf{Uncertainty Framework Overview.} Subjective Bayesian GNN (a) designed for estimating the different types of uncertainties. The loss function includes square error (d) to reduce bias, GKDE (b) to reduce errors in uncertainty estimation and teacher network (c) to refine class probability.
}
  \label{fig:framework}
\vspace{-4mm}
 }
\end{figure*}  

\vspace{-2mm}
\subsection{Proposed Uncertainty Framework} \label{subsec:bay-dl}
\vspace{-2mm}
\textbf{Learning evidential uncertainty.}
As discussed in Section~\ref{SL}, evidential uncertainty can be derived from multinomial opinions or equivalently Dirichlet distributions to model a probability distribution for the class probabilities. Therefore, we design a Subjective GNN (S-GNN) $f$ to form their multinomial opinions for the node-level Dirichlet distribution $\text{Dir}(\p_i | {\bm \alpha}_i)$ of a given node $i$. Then, the conditional probability $P(\p |A, \rr; \bm{\theta})$ can be obtained by:
\begin{eqnarray}\small 
P(\p |A, \rr; \bm{\theta} )=\prod\nolimits_{i=1}^N \text{Dir}(\p_i|\bm{\alpha}_i), \ \bm{\alpha}_i=f_i(A,\rr;\bm{\theta}), \label{GCN_1}
\end{eqnarray}
where $f_i$ is the output of S-GNN for node $i$, $\bm{\theta}$ is the model parameters, and $A$ is an adjacency matrix. The Dirichlet probability function $\text{Dir}(\p_i | \bm{\alpha}_i)$ is defined by Eq.~\eqref{eq:multinomial-dir}.

% \begin{eqnarray}\small 
%     \text{Dir}(\p_i | \bm{\alpha}_i)=\frac{\Gamma(S_i)}{\prod_{k=1}^K \Gamma(\alpha_{ik})}\prod\nolimits_{k=1}^K p_{ik}^{\alpha_{ik}-1}.
% \end{eqnarray}
Note that S-GNN is similar to classical GNN, except that we use an activation layer (e.g., \textit{ReLU}) instead of the \textit{softmax} layer (only outputs class probabilities).  This ensures that S-GNN would output non-negative values, which are taken as the parameters for the predicted Dirichlet distribution.

\textbf{Learning probabilistic uncertainty.}
Since probabilistic uncertainty relies on a Bayesian framework, we proposed a Subjective Bayesian GNN (S-BGNN) that adapts S-GNN to a Bayesian framework, with the model parameters $\bm{\theta}$ following a prior distribution. The joint class probability of $\y$ can be estimated by: 
\begin{eqnarray}
\vspace{-3mm}
\small 
P(\y |A, \rr; \mathcal{G}) &=& \int \int P(\y | \p) P(\p |A, \rr; \bm{\theta} ) P(\bm{\theta} | \mathcal{G}) d \p d\bm{\theta} \nonumber \\
&\approx& \frac{1}{M}\sum_{m=1}^M  \sum_{i=1}^N \int P(\y_i | \p_i) P(\p_i | A, \rr;\bm{\theta}^{(m)} ) d \p_i, \quad \bm{\theta}^{(m)} \sim q( \bm{\theta}) 
\label{Baye_model}
\vspace{-2mm}
\end{eqnarray}
where $P(\bm{\theta} | \mathcal{G})$ is the posterior, estimated via dropout inference, that provides an approximate solution of posterior $q(\bm{\theta})$ and taking samples from the posterior distribution of models~\cite{gal2016dropout}.  Thanks to the benefit of dropout inference, training a DL model directly by minimizing the cross entropy (or square error) loss function can effectively minimize the KL-divergence between the approximated distribution and the full posterior (i.e., KL[$q(\bm{\theta})\|P(\theta|\mathcal{G})$]) in variational inference~\cite{gal2016dropout, kendall2015bayesian}. For interested readers, please refer to more detail in Appendix B.8.

Therefore, training S-GNN with stochastic gradient descent enables learning of an approximated distribution of weights, which can provide good explainability of data and prevent overfitting.  We use a {\em loss function} to compute its Bayes risk with respect to the sum of squares loss $\|\y-\p\|^2_2$ by:
\begin{eqnarray}\small 
\mathcal{L}(\bm{\theta}) =  \sum\nolimits_{i\in \mathbb{L}} \int \|\y_i-\p_i\|^2_2 \cdot P(\p_i |A, \rr; \bm{\theta}) d \p_i 
=  \sum\nolimits_{i\in \mathbb{L}} \sum\nolimits_{k=1}^K \big(y_{ik}-\mathbb{E}[p_{ik}]\big)^2 + \text{Var}(p_{ik}),
\label{loss}
\end{eqnarray}
where $\y_i$ is an one-hot vector encoding the ground-truth class with $y_{ij} = 1$ and $y_{ik} \neq $ for all $k \neq j$ and $j$ is a class label. Eq.~\eqref{loss} aims to minimize the prediction error and variance, leading to maximizing the classification accuracy of each training node by removing excessive misleading evidence.
%~\cite{sensoy2018evidential}
\vspace{-1mm}
\subsection{Graph-based Kernel Dirichlet distribution Estimation (GKDE)}
\begin{wrapfigure}{R}{0.4\textwidth}
  \centering
  \includegraphics[width=0.4\textwidth]{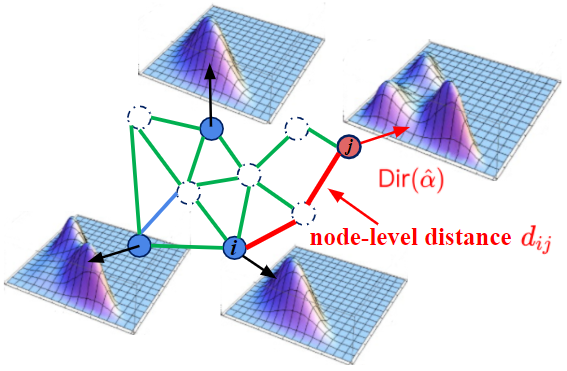}
 \caption{\small{Illustration of GKDE. Estimate prior Dirichlet distribution $\text{Dir}(\hat{\alpha})$ for node $j$ (red) based on training nodes (blue) and graph structure information.}}
  \label{fig:gkde}
  \vspace{-2mm}
\end{wrapfigure} 
\vspace{-1mm}
The loss function in Eq.~\eqref{loss} is designed to measure the sum of squared loss based on class labels of training nodes.  However, it does not directly measure the quality of the predicted node-level Dirichlet distributions.  To address this limitation, we proposed \textit{Graph-based Kernel Dirichlet distribution Estimation} (GKDE) to better estimate node-level Dirichlet distributions by using graph structure information.  The key idea of the GKDE is to estimate prior Dirichlet distribution parameters for each node based on the class labels of training nodes (see Figure~\ref{fig:gkde}). Then, we use the estimated prior Dirichlet distribution in the training process to learn the following patterns: (i) nodes with a high vacuity will be shown far from training nodes; and (ii) nodes with a high dissonance will be shown near the boundaries of classes.

\vspace{-1mm}
Based on SL, let each training node represent one evidence for its class label. Denote the contribution of evidence estimation for node $j$ from training node $i$ by $\mathbf{h}(y_i,d_{ij}) =[h_1, \ldots, h_k, \ldots, h_K] \in[0, 1]^K$, where $h_k(y_i,d_{ij})$ is obtained by: 
\vspace{-1mm}
\begin{eqnarray}
h_k(y_i,d_{ij}) = \begin{cases}0 & y_i \neq k \\ g(d_{ij}) & y_i = k,  \end{cases}
\vspace{-2mm}
\end{eqnarray}
 $g(d_{ij}) = \frac{1}{\sigma \sqrt{2\pi}}\exp({-\frac{{d^2_{ij}}}{2\sigma^2}})$ is the Gaussian kernel function used to estimate the distribution effect between nodes $i$ and $j$, and $d_{ij}$ means the \textbf{node-level distance} (\textbf{a shortest path between nodes $i$ and $j$}), and $\sigma$ is the bandwidth parameter. The prior evidence is estimated based GKDE: $\hat{\bm{e}}_j = \sum_{i\in \mathbb{L}} \mathbf{h}(y_i,d_{ij})$, where $\mathbb{L}$ is a set of training nodes and the prior Dirichlet distribution $\hat{\bm{\alpha}}_j = \hat{\bm{e}}_j +\bf 1$.  
% Vacuity and dissonance can be obtained using $\hat{\bm{\alpha}}_j$, such as $\hat{u}_{v_j} = K/\sum_{j\in \mathbb{L}} \hat{\bm{\alpha}}_j$. 
During the training process, we minimize the KL-divergence between model predictions of Dirichlet distribution and prior distribution: $\min \text{KL}[\text{Dir}(\bm{\alpha}) \| \text{Dir}(\hat{\bm{\alpha}})]$.
This process can prioritize the extent of data relevance based on the estimated evidential uncertainty, which is proven effective based on the proposition below.

\begin{restatable}{proposition}{primeproposition}
Given $L$ training nodes, for any testing nodes $i$ and $j$, let ${\bm d}_i = [d_{i1}, \ldots, d_{iL}]$ be the vector of graph distances from nodes $i$ to training nodes and ${\bm d}_j = [d_{j1}, \ldots, d_{jL}]$ be the graph distances from nodes $j$ to training nodes, where $d_{il}$ is the node-level distance between nodes $i$ and $l$. If for all $l\in \{1, \ldots, L\}$, $d_{il} \ge d_{jl}$, then we have
\begin{eqnarray}
\hat{u}_{v_i} \ge \hat{u}_{v_j}, \nonumber
\end{eqnarray}
where $ \hat{u}_{v_i}$ and $\hat{u}_{v_j}$ refer to vacuity uncertainties of nodes $i$ and $j$ estimated based on GKDE.
\label{theorem: vacuity}
\vspace{-1mm}
\end{restatable}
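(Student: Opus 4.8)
The plan is to reduce the vacuity of each node to a closed-form expression that depends only on the graph distances to the training nodes, and then to exploit the monotonicity of the Gaussian kernel. By Equation~\eqref{eq:belief}, vacuity is $u_v = W/S$ with $W = K$ and $S = \sum_{k=1}^K \alpha_k$, so the entire claim will follow once I express $\hat{S}_j$ (the total strength of the prior Dirichlet distribution produced by GKDE) as a monotone function of the distance vector ${\bm d}_j$.

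First I would compute the total prior evidence at a node $j$. From the GKDE construction, $\hat{\bm e}_j = \sum_{l=1}^L {\bf h}(y_l, d_{jl})$, and each training node $l$ contributes $g(d_{jl})$ to exactly one coordinate (the one indexed by its label $y_l$) and zero to all others. Summing over classes therefore collapses the label bookkeeping: $\sum_{k=1}^K \hat e_{jk} = \sum_{l=1}^L g(d_{jl})$, independent of the particular labels $y_l$. Since $\hat{\bm\alpha}_j = \hat{\bm e}_j + {\bf 1}$, the total strength is $\hat{S}_j = K + \sum_{l=1}^L g(d_{jl})$, and hence $\hat u_{v_j} = K / \big(K + \sum_{l=1}^L g(d_{jl})\big)$.

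Next I would invoke the monotonicity of the kernel. The Gaussian kernel $g(d) = \frac{1}{\sigma\sqrt{2\pi}}\exp(-d^2/(2\sigma^2))$ is monotonically decreasing in $d \ge 0$, so the hypothesis $d_{il} \ge d_{jl}$ for every $l$ gives $g(d_{il}) \le g(d_{jl})$ termwise; summing yields $\sum_{l=1}^L g(d_{il}) \le \sum_{l=1}^L g(d_{jl})$, i.e.\ $\hat{S}_i \le \hat{S}_j$. Finally, because $S \mapsto K/S$ is decreasing, the inequality on total strengths flips to $\hat u_{v_i} = K/\hat{S}_i \ge K/\hat{S}_j = \hat u_{v_j}$, which is the desired conclusion.

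The argument has no genuinely hard step; the only point requiring care is the observation that the total evidence $\sum_k \hat e_{jk}$ is label-independent and reduces to a plain sum of kernel values over training nodes. Once that reduction is in hand, everything rests on the two elementary facts that $g$ is decreasing in the distance and that $K/S$ is decreasing in $S$, so the chain of inequalities is immediate.
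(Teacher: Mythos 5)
Your proposal is correct and follows essentially the same route as the paper's own proof: reduce the GKDE vacuity to $\hat u_{v} = K/\bigl(K + \sum_{l=1}^L g(d_{\cdot l})\bigr)$ by noting the total evidence is label-independent, then apply the monotonicity of the Gaussian kernel termwise and the monotonicity of $S \mapsto K/S$. No gaps; if anything, your write-up states the label-independence reduction and the two monotonicity facts more explicitly than the paper does.
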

The proof for this proposition can be found in Appendix A.2. The above proposition shows that if a testing node is too far from training nodes, the vacuity will increase, implying that an OOD node is expected to have a high vacuity.

In addition, we designed a simple iterative knowledge distillation method~\cite{hinton2015distilling} (i.e., Teacher Network) to refine the node-level classification probabilities. The key idea is to train our proposed model (Student) to imitate the outputs of a pre-train a vanilla GNN (Teacher) by adding a regularization term of KL-divergence. This leads to solving the following optimization problem: 
\begin{eqnarray}\small 
\vspace{-1mm}
\min\nolimits_{\bm{\theta}}  \mathcal{L}(\bm{\theta}) + \lambda_1 \text{KL}[\text{Dir}({\bm \alpha}) \| \text{Dir}(\hat{\bm \alpha})] + \lambda_2  \text{KL}[P(\y \mid A,\rr;\mathcal{G}) \parallel P(\y|\hat{\p})], 
\label{joint loss}
\vspace{-1mm}
\end{eqnarray}
where $\hat{\p}$ is the vanilla GNN's (Teacher) output and $\lambda_1$ and $\lambda_2$ are trade-off parameters. 

\section{Experiments} \label{sec:exp-results-analysis}
\vspace{-1mm}
In this section, we conduct experiments on the tasks of misclassification and OOD detections to answer the following questions for semi-supervised node classification:

\vspace{-1mm}
\noindent {\bf Q1. Misclassification Detection:} What type of uncertainty is the most promising indicator of high confidence in node classification predictions? 

\vspace{-1mm}
\noindent {\bf Q2. OOD Detection:}  What type of uncertainty is a key indicator of accurate detection of OOD nodes? 

\vspace{-1mm}
\noindent {\bf Q3. GKDE with Uncertainty Estimates:}  How can GKDE help enhance prediction tasks with what types of uncertainty estimates?

\vspace{-1mm}
Through extensive experiments, we found the following answers for the above questions:

\vspace{-1mm}
\noindent {\bf A1.} Dissonance (i.e., uncertainty due to conflicting evidence) is more effective than other uncertainty estimates in misclassification detection. 
\vspace{-1mm}

\noindent {\bf A2.}  Vacuity (i.e., uncertainty due to lack of confidence) is more effective than other uncertainty estimates in OOD detection.

\vspace{-1mm}
\noindent {\bf A3.}  GKDE can indeed help improve the estimation quality of node-level Dirichlet distributions, resulting in a higher OOD detection.

\vspace{-2mm}
\subsection{Experiment Setup} 
\vspace{-2mm}
\textbf{Datasets}: We used six datasets, including three citation network datasets~\cite{sen2008collective} (i.e., Cora, Citeseer, Pubmed) and three new datasets~\cite{shchur2018pitfalls} (i.e., Coauthor Physics, Amazon Computer, and Amazon Photo). We summarized the description and experimental setup of the used datasets in Appendix B.2\footnote{The source code and datasets are accessible at \href{https://github.com/zxj32/uncertainty-GNN}{\color{magenta}{https://github.com/zxj32/uncertainty-GNN}}}.  

\vspace{-1mm}
\textbf{Comparing Schemes}: 
We conducted the extensive comparative performance analysis based on our proposed models and several state-of-the-art competitive counterparts. We implemented all models based on the most popular GNN model, GCN~\cite{kipf2017semi}.  We compared our model (S-BGCN-T-K) against: (1) Softmax-based GCN~\cite{kipf2017semi} with uncertainty measured based on entropy; and (2) Drop-GCN that adapts the Monte-Carlo Dropout~\cite{gal2016dropout, ryu2019uncertainty} into the GCN model to learn probabilistic uncertainty; (3) EDL-GCN that adapts the EDL model~\cite{sensoy2018evidential} with GCN to estimate evidential uncertainty; (4) DPN-GCN that adapts the DPN~\cite{malinin2018predictive} method with GCN to estimate probabilistic uncertainty. We evaluated the performance of all models considered using the area under the ROC (AUROC) curve and area under the Precision-Recall (AUPR) curve in both experiments~\cite{hendrycks2016baseline}.

\vspace{-1mm}
\subsection{Results}
\vspace{-1mm}
\noindent {\bf Misclassification Detection.} The misclassification detection experiment involves detecting whether a given prediction is incorrect using an uncertainty estimate.  Table~\ref{AUPR:uncertainty} shows that S-BGCN-T-K outperforms all baseline models under the AUROC and AUPR for misclassification detection. The outperformance of dissonance-based detection is fairly impressive. This confirms that low dissonance (a small amount of conflicting evidence) is the key to maximize the accuracy of node classification prediction.  We observe the following performance order: ${\tt Dissonance} > {\tt Entropy} \approx {\tt Aleatoric} > {\tt Vacuity} \approx {\tt Epistemic}$, which is aligned with our conjecture: higher dissonance with conflicting prediction leads to higher misclassification detection. We also conducted experiments on additional three datasets and observed similar trends of the results, as demonstrated in Appendix C.
\vspace{-1mm}
\begin{table*}[th!]
% \tiny
\scriptsize
\caption{AUROC and AUPR for the Misclassification Detection.}
\vspace{-2mm}
\centering
\begin{tabular}{c||c|ccccc|ccccc|c}
\hline
\multirow{2}{*}{Data} & \multirow{2}{*}{Model} & \multicolumn{5}{c|}{AUROC} & \multicolumn{5}{c|}{AUPR} & \multirow{2}{*}{Acc} \\  
                  &                   & Va.\tnote{*}& Dis. & Al. & Ep. & En. & Va. & Dis. &  Al. & Ep. &En. & \\ \hline
\multirow{5}{*}{Cora} & S-BGCN-T-K & 70.6 & \textbf{82.4} & 75.3 & 68.8 & 77.7&  90.3  & \textbf{95.4} & 92.4 & 87.8 &93.4 & \textbf{82.0} \\ 
                  &  EDL-GCN &  70.2  &  81.5  & -   &  - &  76.9  &90.0 &  94.6  &  -  & - &  93.6 & 81.5 \\    
                  &    DPN-GCN &  -  &  -  & 78.3   &  75.5 &  77.3  & -  &  -  &  92.4  & 92.0 &92.4 & 80.8 \\  
                  & Drop-GCN &  -  &  -  & 73.9   &  66.7  & 76.9  & -  &  -  &  92.7  & 90.0 &93.6 & 81.3 \\ 
                  &GCN & -  &  -  &  -  &  -   & 79.6&  -  &  -  &  -  &  -  & 94.1 & 81.5 \\ 
                  \hline
\multirow{5}{*}{Citeseer} &  S-BGCN-T-K & 65.4& \textbf{74.0}& 67.2& 60.7&  70.0& 79.8 & \textbf{85.6} & 82.2 & 75.2 &83.5& \textbf{71.0}  \\ 
                   &  EDL-GCN &  64.9  &  73.6  & -   &  - &  69.6  &79.2 &  84.6 &  -  & - &  82.9 & 70.2 \\    
                  &    DPN-GCN &  -  &  -  & 66.0   &  64.9 &  65.5  & -  &  -  & 78.7  & 77.6 &78.1 & 68.1 \\  
                  &                  Drop-GCN &   - &  -  & 66.4 & 60.8 &   69.8 &   -  & -& 82.3 & 77.8  &83.7 & 70.9  \\ 
                  &                  GCN &  -  &  -  &  -  &  -  & 71.4 &  -  &  -  &  -  & -   &83.2 & 70.3 \\ \hline
\multirow{5}{*}{Pubmed} &  S-BGCN-T-K & 64.1 & \textbf{73.3} & 69.3 & 64.2 &  70.7& 85.6 & \textbf{90.8} & 88.8& 86.1   &89.2 & \textbf{79.3} \\
                  &  EDL-GCN &  62.6  &  69.0  & -   &  - &  67.2  &84.6 &  88.9 &  -  & - &  81.7 & 79.0 \\    
                  &    DPN-GCN &  -  &  -  & 72.7   &  69.2 &  72.5  & -  &  -  &  87.8  & 86.8 &87.7 & 77.1 \\  
                  &                  Drop-GCN &  -  &  -  &  67.3 & 66.1 &   67.2&  -  &  -& 88.6 & 85.6   &89.0 & 79.0  \\ 
                  &                  GCN &  -  &  -  &   - & -  & 68.5&  -  &   - & -   &  - &89.2 & 79.0 \\ \hline
\end{tabular}
\begin{tablenotes}\scriptsize
\centering
\item[*] Va.: Vacuity, Dis.: Dissonance, Al.: Aleatoric, Ep.: Epistemic, En.: Entropy 
\end{tablenotes}
\vspace{2mm}
\label{AUPR:uncertainty}
\vspace{-5mm}
\end{table*}

\begin{table*}[th!]
\scriptsize
\caption{AUROC and AUPR for the OOD Detection.}
\vspace{-2mm}
\centering
\begin{tabular}{c||c|ccccc|ccccc}
\hline
\multirow{2}{*}{Data} & \multirow{2}{*}{Model} & \multicolumn{5}{c|}{AUROC} & \multicolumn{5}{c}{AUPR} \\  
                  &                   & Va.\tnote{*}& Dis. & Al. & Ep.  &En. & Va. & Dis. &  Al. & Ep. & En. \\ \hline
\multirow{4}{*}{Cora} & S-BGCN-T-K & \textbf{87.6} & 75.5 & 85.5 & 70.8  & 84.8&  \textbf{78.4} & 49.0 & 75.3 & 44.5 &  73.1  \\ 
                  &    EDL-GCN &            84.5 & 81.0 & & -& 83.3&    74.2 & 53.2 & - & -&71.4  \\ 
                  &       DPN-GCN &  -  &  -  & 77.3   &  78.9 &  78.3  & -  &  -  &  58.5  & 62.8 &  63.0  \\ 
                  &       Drop-GCN &  -  &  -  & 81.9   &  70.5 &  80.9  & -  &  -  &  69.7  & 44.2 &  67.2  \\  
                  &                  GCN & -  &  -  &  -  &  -  &  80.7&  -  &  -  &  -  &  -  & 66.9 \\ \hline
\multirow{4}{*}{Citeseer} &  S-BGCN-T-K & \textbf{84.8} &55.2&78.4 & 55.1 &  74.0& \textbf{86.8} & 54.1 & 80.8 & 55.8 & 74.0  \\ 
                  &  EDL-GCN &                 78.4 &59.4&- & - &  69.1&         79.8 & 57.3 & - & - & 69.0  \\ 
                  & DPN-GCN &   - &  -  & 68.3 & 72.2 &  69.5 &   -  & -& 68.5 & 72.1   & 70.3  \\ 
                  &                  Drop-GCN &   - &  -  & 72.3 & 61.4 &  70.6 &   -  & -& 73.5 & 60.8   & 70.0  \\ 
                  &                  GCN &  -  &  -  &  -  &  -  &  70.8 &  -  &  -  &  -  & -   & 70.2  \\ \hline
\multirow{4}{*}{Pubmed} &  S-BGCN-T-K & \textbf{74.6} &67.9& 71.8 & 59.2 & 72.2& \textbf{69.6} & 52.9 & 63.6& 44.0   &56.5  \\
                  &    EDL-GCN & 71.5 &68.2& - & - &  70.5& 65.3 & 53.1 & -& -   & 55.0  \\
                   & DPN-GCN &  -  &  -  &  63.5 & 63.7 &   63.5&  -  &  -& 50.7 & 53.9   & 51.1  \\ 
                  &                  Drop-GCN &  -  &  -  &  68.7 & 60.8 &   66.7&  -  &  -& 59.7 & 46.7   & 54.8  \\ 
                  &                  GCN &  -  &  -  &   - & -   &  68.3&  -  &   -   &  -  & -&55.3 \\ \hline
\multirow{4}{*}{Amazon Photo} &  S-BGCN-T-K & \textbf{93.4} & 76.4& 91.4& 32.2  & 91.4&         \textbf{ 94.8} & 68.0 & 92.3& 42.3   & 92.5  \\
                  &     EDL-GCN & 63.4 & 78.1& - & - & 79.2&          66.2 & 74.8 & -&-   & 81.2  \\
                  & DPN-GCN &  -  &  -  &  83.6 &  83.6 &   83.6&  -  &  -& 82.6 & 82.4 &   82.5 \\ 
                  &                  Drop-GCN &  -  &  -  &  84.5 & 58.7 &   84.3&  -  &  -& 87.0 & 57.7   &86.9 \\ 
                  &                  GCN &  -  &  -  &   - & -   &   84.4&  -  &   -   &  -  & -&87.0 \\ \hline
\multirow{4}{*}{Amazon Computer} &  S-BGCN-T-K & \textbf{82.3} & 76.6& 80.9& 55.4  & 80.9& \textbf{70.5} & 52.8 & 60.9& 35.9 & 60.6  \\
                  &    EDL-GCN & 53.2 & 70.1& - & - &  70.0&    33.2 & 43.9 & -& - & 45.7  \\
                   &                  DPN-GCN &  -  &  -  &  77.6 & 77.7 & 77.7 &  -  &  -& 50.8 & 51.2 & 51.0 \\ 
                  &                  Drop-GCN &  -  &  -  &  74.4 & 70.5 &  74.3&  -  &  -& 50.0 & 46.7   &  49.8 \\ 
                  &                  GCN &  -  &  -  &   - & -   &   74.0&    - & -   &  -  & -&48.7 \\ \hline
\multirow{4}{*}{Coauthor Physics} &  S-BGCN-T-K & \textbf{91.3} & 87.6& 89.7& 61.8 & 89.8& \textbf{72.2} & 56.6 & 68.1& 25.9 & 67.9  \\
                  &     EDL-GCN & 88.2 & 85.8& - & -  & 87.6&    67.1 & 51.2 & -&- & 62.1  \\
                   &                  DPN-GCN &  -  &  -  & 85.5 &85.6 &   85.5&  -  &  -& 59.8 & 60.2 &   59.8 \\ 
                  &                  Drop-GCN &  -  &  -  &  89.2 & 78.4 &   89.3&  -  &  -& 66.6 & 37.1   &66.5 \\ 
                  &                  GCN &  -  &  -  &   -    &  - & 89.1&  -  &  -   &  -  & -&64.0 \\ \hline
\end{tabular}
\begin{tablenotes}\scriptsize
\centering
\item[*] Va.: Vacuity, Dis.: Dissonance, Al.: Aleatoric, Ep.: Epistemic, En.: Entropy 
\end{tablenotes}
\vspace{2mm}
\label{Table: AUROC_AUPR:ood}
\vspace{-7mm}
\end{table*}

\vspace{-0.3mm}
\noindent {\bf OOD Detection.} This experiment involves detecting whether an input example is out-of-distribution (OOD) given an estimate of uncertainty. For semi-supervised node classification, we randomly selected one to four categories as OOD categories and trained the models based on training nodes of the other categories. Due to the space constraint, the experimental setup for the OOD detection is detailed in Appendix B.3. 

\vspace{-1mm}
In Table~\ref{Table: AUROC_AUPR:ood}, across six network datasets, our vacuity-based detection significantly outperformed the other competitive methods, exceeding the performance of the epistemic uncertainty and other type of uncertainties. This demonstrates that vacuity-based model is more effective than other uncertainty estimates-based counterparts in increasing OOD detection. We observed the following performance order: ${\tt Vacuity} > {\tt Entropy} \approx {\tt Aleatoric} > {\tt Epistemic} \approx {\tt Dissonance}$, which is consistent with the theoretical results as shown in Theorem~\ref{theorem1}.

\vspace{-1mm}
\noindent {\bf Ablation Study.}
We conducted additional experiments (see Table~\ref{Ablation}) in order to demonstrate the contributions of the key technical components, including GKDE, Teacher Network, and subjective Bayesian framework.  The key findings obtained from this experiment are: (1) GKDE can enhance the OOD detection (i.e., 30\% increase with vacuity), which is consistent with our theoretical proof about the outperformance of GKDE in uncertainty estimation, i.e., OOD nodes have a higher vacuity than other nodes; and (2) the Teacher Network can further improve the node classification accuracy.
\vspace{-2mm}
\subsection{Why is Epistemic Uncertainty Less Effective than Vacuity?}
\vspace{-2mm}
Although epistemic uncertainty is known to be effective to improve OOD detection~\cite{gal2016dropout, kendall2017uncertainties} in computer vision applications, our results demonstrate it is less effective than our vacuity-based approach.  The first potential reason is that epistemic uncertainty is always smaller than vacuity (From Theorem~\ref{theorem1}), which potentially indicates that epistemic may capture less information related to OOD. Another potential reason is that the previous success of epistemic uncertainty for OOD detection is limited to supervised learning in computer vision applications, but its  effectiveness for OOD detection was not sufficiently validated in semi-supervised learning tasks.  Recall that epistemic uncertainty (i.e., model uncertainty) is calculated based on mutual information (see Eq.~\eqref{eq:epistemic}).  In a semi-supervised setting, the features of unlabeled nodes are also fed to a model for training process to provide the model with a high confidence on its output.  For example, the model output $P(\y|A, \rr;\theta)$ would not change too much even with differently sampled parameters $\bm{\theta}$, i.e., $P(\y|A, \rr;\theta^{(i)})\approx P(\y|A, \rr;\theta^{(j)})$, which result in a low epistemic uncertainty.  We also designed a semi-supervised learning experiment for image classification and observed a consistent pattern with the results demonstrated in Appendix C.6. 

\begin{table*}[th!]
% \tiny
\scriptsize
\caption{Ablation study of our proposed models: (1) {\tt S-GCN}: Subjective GCN with vacuity and dissonance estimation; (2) {\tt S-BGCN}: S-GCN with Bayesian framework; (3) {\tt S-BGCN-T}: S-BGCN with a Teacher Network; (4) {\tt S-BGCN-T-K}: S-BGCN-T with GKDE to improve uncertainty estimation.}
\vspace{1mm}
\centering
\vspace{-3mm}
\begin{tabular}{c||c|ccccc|ccccc|c}
\hline
\multirow{2}{*}{Data} & \multirow{2}{*}{Model} & \multicolumn{5}{c|}{AUROC (Misclassification Detection)} & \multicolumn{5}{c|}{AUPR (Misclassification Detection)} & \multirow{2}{*}{Acc} \\
                  &                   & Va.\tnote{*}& Dis. & Al. & Ep. & En. & Va. & Dis. &  Al. & Ep. &En. & \\ \hline
\multirow{4}{*}{Cora} & S-BGCN-T-K & 70.6 & 82.4 & 75.3 & 68.8 & 77.7&  90.3  & \textbf{95.4} & 92.4 & 87.8 &93.4 & 82.0 \\ 
 & S-BGCN-T &  70.8  &  \textbf{82.5}  &75.3  & 68.9 & 77.8  &90.4  &  \textbf{95.4}  &  92.6  & 88.0 &93.4 & \textbf{82.2} \\
                  & S-BGCN &  69.8  &  81.4  & 73.9   &  66.7  & 76.9  & 89.4  &  94.3  &  92.3  & 88.0 &93.1 & 81.2 \\  
                   & S-GCN &  70.2  &  81.5  & -   & -  & 76.9  & 90.0  &  94.6  &  -  & - &93.6 & 81.5 \\
                  \hline
 & & \multicolumn{5}{c|}{AUROC (OOD Detection)} & \multicolumn{5}{c|}{AUPR (OOD Detection)} &  \\ \hline 
\multirow{4}{*}{Amazon Photo} &  S-BGCN-T-K & \textbf{93.4} & 76.4& 91.4& 32.2  & 91.4&         \textbf{ 94.8} & 68.0 & 92.3& 42.3   & 92.5 &-  \\
                  &     S-BGCN-T & 64.0 & 77.5& 79.9 & 52.6 & 79.8&          67.0 & 75.3 & 82.0& 53.7   & 81.9&-  \\
                  & S-BGCN & 63.0 & 76.6& 79.8 & 52.7 & 79.7&          66.5 & 75.1 & 82.1& 53.9   & 81.7&- \\ 
                  & S-GCN & 64.0 & 77.1& - & - & 79.6&          67.0 & 74.9 & -& -   & 81.6&- \\ \hline
\end{tabular}
\begin{tablenotes}\scriptsize
\centering
\item[*] Va.: Vacuity, Dis.: Dissonance, Al.: Aleatoric, Ep.: Epistemic, En.: Entropy 
\end{tablenotes}
\vspace{2mm}
\label{Ablation}
\vspace{-5mm}
\end{table*}

\vspace{-2mm}
\section{Conclusion} \label{sec:conclusion}
\vspace{-2mm}
In this work, we proposed a multi-source uncertainty framework of GNNs for semi-supervised node classification. Our proposed framework provides an effective way of predicting node classification and out-of-distribution detection considering multiple types of uncertainty. We leveraged various types of uncertainty estimates from both DL and evidence/belief theory domains. Through our extensive experiments, we found that dissonance-based detection yielded the best performance on misclassification detection while vacuity-based detection performed the best for OOD detection, compared to other competitive counterparts.  In particular, it was noticeable that applying GKDE and the Teacher network further enhanced the accuracy in node classification and uncertainty estimates.

\section*{Acknowledgments}
We would like to thank Yuzhe Ou for providing proof suggestions. 
This work is supported by the National Science Foundation
(NSF) under Grant No \#1815696 and \#1750911.

\section*{Broader Impact}
In this paper, we propose a uncertainty-aware semi-supervised learning framework of GNN for predicting multi-dimensional uncertainties for the task of semi-supervised node classification. Our proposed framework can be applied to a wide range of applications, including computer vision, natural language processing, recommendation systems, traffic prediction, generative models and many more~\cite{zhou2018graph}. Our proposed framework can be applied to predict multiple uncertainties of different roots for GNNs in these applications, improving the understanding of individual decisions, as well as the underlying models. While there will be important impacts resulting from the use of GNNs in general, our focus in this work is on investigating the impact of using our method to predict multi-source uncertainties for such systems. The additional benefits of this method include improvement of safety and transparency in decision-critical applications to avoid overconfident prediction, which can easily lead to misclassification. 

We see promising research opportunities that can adopt our uncertainty framework, such as investigating whether this uncertainty framework can further enhance misclassification detection or OOD detection. To mitigate the risk from different types of uncertainties, we encourage future research to understand the impacts of this proposed uncertainty framework to solve other real world problems.

\newpage
\small
\medskip

% \nocite{langley00}
\bibliographystyle{abbrv}
\bibliography{reference}

\newpage

\appendix

\centerline{{\Large \textbf{Appendix}}}
\section{Proofs}
\subsection{Theorem 1's Proof}

\primetheorem*

\textbf{Interpretation}. {\bf Theorem 1.1 (a)} implies that increases in both uncertainty types may not happen 
at the same time.  A higher vacuity leads to a lower dissonance, and vice versa (a higher dissonance leads to a lower vacuity).  This indicates that a high dissonance only occurs only when a large amount of evidence is available and the vacuity is low.  {\bf Theorem 1.1 (b)} shows relationships between vacuity and epistemic uncertainty in which vacuity is an upper bound of 
epistemic uncertainty.  Although some existing approaches~\cite{josang2016subjective, sensoy2018evidential}  treat epistemic uncertainty the same as vacuity, it is not necessarily true except for an extreme case \feng{where} a sufficiently large amount of evidence available, making vacuity  \feng{close to} zero. {\bf Theorem 1.2 (a) and (b)} explain how entropy differs from vacuity and/or dissonance.  We observe \feng{that} entropy is 1 when either vacuity or dissonance is 0.  This implies that entropy cannot distinguish different types of uncertainty \feng{due to different root causes}.  For example, \feng{a} high entropy is observed when an example is an either OOD or misclassified example.  Similarly, \feng{a} high aleatoric uncertainty \feng{value} and 
\feng{a} low epistemic uncertainty \feng{value} are observed under both cases.  However, vacuity and dissonance can capture \feng{different} causes of uncertainty \feng{due to lack of information and knowledge and to conflicting evidence, respectively.}  For example, an OOD objects typically show \feng{a} high vacuity \feng{value and a} low dissonance \feng{value} while \feng{a conflicting} \feng{prediction} exhibit\feng{s} \feng{a} low vacuity \feng{and a} high dissonance.
\begin{proof}
    1. (a) 
    Let the opinion $\omega = [b_1, \ldots, b_K, u_v]$, where $K$ is the number of classes, $b_i$ is the belief for class $i$, $u_v$ is the uncertainty mass (vacuity), and $\sum_{i=1}^K b_i + u_v =1$. Dissonance has a upper bound with 
\begin{eqnarray}
u_{diss} &=& \sum_{i=1}^K \Big(\frac{b_i\sum_{j=1, j\neq i}^K b_j \text{Bal}(b_i, b_j)}{\sum_{j=1, j\neq i}^K b_j} \Big) \\
&\le & \sum_{i=1}^K \Big(\frac{b_i\sum_{j=1, j\neq i}^K b_j }{\sum_{j=1, j\neq i}^K b_j} \Big), \quad (\text{since } 0\le \text{Bal}(b_i, b_j) \le 1 ) \nonumber \\
&=& \sum_{i=1}^K b_i, \nonumber
\end{eqnarray}
where $\text{Bal}(b_i, b_j)$ is the relative mass balance, then we have
\begin{eqnarray}
u_{v} + u_{diss} \le \sum_{i=1}^K b_i + u_v = 1.
\end{eqnarray}

1. (b) For the multinomial random variable $y$, we have
\begin{eqnarray}
y \sim \text{Cal}(\p), \quad \p\sim \text{Dir}({\bm \alpha}),
\end{eqnarray}
where $\text{Cal(\p)}$ is the categorical distribution and $ \text{Dir}({\bm \alpha})$ is Dirichlet distribution. Then we have
\begin{eqnarray}
\text{Prob}(y|{\bm \alpha}) = \int \text{Prob}(y|\p) \text{Prob}(\p|{\bm \alpha}) d\p,
\end{eqnarray}
\feng{and} the epistemic \feng{uncertainty is} estimated by mutual information,
\begin{eqnarray}
    \mathcal{I}[y, \p|{\bm \alpha}]
    =  \mathcal{H}\Big[\mathbb{E}_{\text{Prob}(\p|{\bm \alpha})}[P(y|\p)]\Big]- \mathbb{E}_{\text{Prob}(\p|{\bm \alpha})}\Big[\mathcal{H}[P(y|\p)] \Big].
\end{eqnarray}
Now we consider another measure of ensemble diversity\feng{:} \textit{Expected Pairwise KL-Divergence} between each model in the ensemble. Here the expected pairwise KL-Divergence between two independent distribution\feng{s, including} $P(y|\p_1)$ \feng{and} $P(y|\p_2)$, where $\p_1$ and $\p_2$ are two independent samples from $\text{Prob}(\p|{\bm \alpha})$, \feng{can be} computed,
\begin{eqnarray}
    \mathcal{K}[y, \p|{\bm \alpha}] &=& \mathbb{E}_{\text{Prob}(\p_1|{\bm \alpha}\text{Prob}(\p_2|{\bm \alpha})}\Big[KL[P(y| \p_1)\| P(y| \p_2)]   \Big]  \\
    &=& -\sum_{i=1}^K \mathbb{E}_{\text{Prob}(\p_1|{\bm \alpha})}[P(y|\p_1)] \mathbb{E}_{\text{Prob}(\p_2|{\bm \alpha})}[\ln P(y|\p_2)]  - \mathbb{E}_{\text{Prob}(\p|{\bm \alpha})}\Big[\mathcal{H} [P(y|\p)] \Big] \nonumber \\
    &\ge& \mathcal{I}[y, \p|{\bm \alpha}], \nonumber
\end{eqnarray}
where $\mathcal{I}[y, \p_1|{\bm \alpha}] = \mathcal{I}[y, \p_2|{\bm \alpha}]$. \feng{W}e consider Dirichlet ensemble, the \textit{Expected Pairwise KL Divergence},
\begin{eqnarray}
    \mathcal{K}[y, \p|{\bm \alpha}] &=& -\sum_{i=1}^K \frac{\alpha_i}{S} \Big(\psi(\alpha_i ) - \psi(S)  \Big)  - \sum_{i=1}^K -\frac{\alpha_i}{S}\Big(\psi(\alpha_i + 1) - \psi(S + 1)  \Big) \nonumber \\
    &=& \frac{K-1}{S},
\end{eqnarray}
where $S = \sum_{i=1}^K \alpha_i$ and $\psi(\cdot)$ is the \textit{digamma Function}, which is the derivative of the natural logarithm of the gamma function.
Now \feng{we obtain} the relation\feng{s} between vacuity and epistemic,
\begin{eqnarray}
    \underbrace{\frac{K}{S}}_{\text{Vacuity}} >\mathcal{K}[y, \p|{\bm \alpha}] = \frac{K-1}{S}
    \ge  \underbrace{\mathcal{I}[y, \p|{\bm \alpha}]}_{\text{Epistemic}}.
\end{eqnarray}

2. (a) For \feng{an} out-of-distribution sample, $\alpha=[1, \ldots, 1]$,
\feng{the} vacuity \feng{can be calculated as}
\begin{eqnarray}
u_v &=& \frac{K}{\sum_{i=1}^K \alpha_i} = \frac{K}{K} = 1, 
\end{eqnarray}
and the belief mass $b_i = (\alpha_i - 1)/\sum_{i=1}^K \alpha_i= 0$, we estimate dissonance,
\begin{eqnarray}
u_{diss} &=& \sum_{i=1}^K \Big(\frac{b_i\sum_{j=1, j\neq i}^K b_j \text{Bal}(b_i, b_j)}{\sum_{j=1, j\neq i}^K b_j} \Big) = 0.
\end{eqnarray}
Given the expected probability $\hat{p} = [1/K, \ldots, 1/K]^\top$, the entropy is calculated based on $\log_K$,
\begin{eqnarray}
u_{en} = \mathcal{H}[\hat{p}] =-\sum_{i=1}^K \hat{p}_i \log_K \hat{p}_i = -\sum_{i=1}^K \frac{1}{K} \log_K \frac{1}{K} = \log_K \frac{1}{K}^{-1} =\log_K K = 1\feng{,}
\end{eqnarray}
where $\mathcal{H}(\cdot)$ is the entropy.  Based on Dirichlet distribution, the aleatoric uncertainty refers to the expected entropy,
\begin{eqnarray}
u_{alea} &=& \mathbb{E}_{p\sim \text{Dir}(\alpha)}[\mathcal{H}[p]] \\
&=& -\sum_{i=1}^K \frac{\Gamma (S)}{\prod_{i=1}^K\Gamma(\alpha_i)} \int_{S_K} p_i\log_K p_i \prod _{i=1}^K p_i^{\alpha_i-1} d {\bm p} \nonumber \\
&=& - \frac{1}{\ln K}\sum_{i=1}^K \frac{\Gamma (S)}{\prod_{i=1}^K\Gamma(\alpha_i)} \int_{S_K} p_i\ln p_i \prod _{i=1}^K p_i^{\alpha_i-1} d {\bm p} \nonumber \\
&=& -\frac{1}{\ln K}\sum_{i=1}^K \frac{\alpha_i}{S} \frac{\Gamma(S+1)}{\Gamma(\alpha_i +1)\prod_{i'=1, \neq i}^K \Gamma(\alpha_{i'})} \int_{S_K} p_i^{\alpha_i}\ln p_i \prod _{i'=1, \neq i}^K p_{i'}^{\alpha_{i'}-1} d {\bm p} \nonumber \\
&=& \frac{1}{\ln K}\sum_{i=1}^K \frac{\alpha_i}{S} \big(\psi(S+1) - \psi(\alpha_i +1) \big) \nonumber \\
&=& \frac{1}{\ln K}\sum_{i=1}^K \frac{1}{K}(\psi(K+1)-\psi(2)) \nonumber \\
&=& \frac{1}{\ln K} (\psi(K+1)-\psi(2)) \nonumber \\
&=&\frac{1}{\ln K} ( \psi(2) +\sum_{k=2}^K \frac{1}{k}-\psi(2)) \nonumber \\
&=&\frac{1}{\ln K} \sum_{k=2}^K \frac{1}{k} <\frac{1}{\ln K} \ln K = 1, \nonumber
\label{Eq_alea}
\end{eqnarray}
where $S= \sum_{i=1}^K \alpha_i$\feng{,} ${\bm p} = [p_1, \ldots, p_K]^\top$, \feng{and} $K\ge 2$ is the number of category. The epistemic uncertainty \feng{can be calculated via} the mutual information,
\begin{eqnarray}
u_{epis} &=& \mathcal{H}[\mathbb{E}_{p\sim \text{Dir}(\alpha)}[p]] - \mathbb{E}_{p\sim \text{Dir}(\alpha)}[\mathcal{H}[p]] \\
&=& \mathcal{H}[\hat{p}] - u_{alea} \nonumber \\ 
&=& 1 - \frac{1}{\ln K} \sum_{k=2}^K \frac{1}{k} < 1. \nonumber
\end{eqnarray}
To compare aleatoric \feng{uncertainty} \feng{with} epistemic uncertainty, we first prove that aleatoric uncertainty (Eq.~\eqref{Eq_alea}) is monotonically increasing and converging to 1 \feng{as $K$ increases}. \feng{B}ased on \textit{Lemma~\ref{lemma1}}\feng{,} we have
\begin{eqnarray}
&&\Big(\ln(K+1)-\ln K \Big)\sum_{k=2}^K \frac{1}{k} < \frac{\ln K}{K+1} \nonumber \\
&&\Rightarrow \ln(K+1)\sum_{k=2}^K \frac{1}{k}  < \ln K \Big(\sum_{k=2}^K \frac{1}{k}  + \frac{1}{K+1} \Big) = \ln K \sum_{k=2}^{K+1} \frac{1}{k} \nonumber \\
&& \Rightarrow \frac{1}{\ln K} \sum_{k=2}^K \frac{1}{k}  < \frac{1}{\ln (K+1)} \sum_{k=2}^{K+1} \frac{1}{k} \label{use_lemma1}\feng{.}
\end{eqnarray}
\feng{B}ased on Eq.~\eqref{use_lemma1} and Eq.~\eqref{Eq_alea}, we prove that aleatoric uncertainty is monotonically increasing with respect to $K$. So the minimum aleatoric \feng{can be shown to} be $\frac{1}{\ln 2} \frac{1}{2}$\feng{,} when $K=2$.

\feng{Similarly,} for epistemic uncertainty, which is monotonically decreasing as $K$ increases based on \textit{Lemma~\ref{lemma1}}, the maximum epistemic \feng{can be shown to} be $1- \frac{1}{\ln 2} \frac{1}{2}$ when $K=2$. Then we have,
\begin{eqnarray}
u_{alea} \ge  \frac{1}{\ln 2} \frac{1}{2} > 1 - \frac{1}{2\ln 2} \ge u_{epis} 
\end{eqnarray}

Therefore, we prove \feng{that} $1= u_v = u_{en}>  u_{alea} > u_{epis} > u_{diss} = 0 $.
    
2. (b) For a conflict\feng{ing} prediction, i.e., $\alpha=[\alpha_1, \ldots, \alpha_K]$\feng{,}  with $\alpha_1 = \alpha_2 =\cdots = \alpha_K =C$, and $S=\sum_{i=1}^K \alpha_i = CK$, the expected  probability $\hat{p}=[1/K, \ldots, 1/K]^\top$, the belief mass $b_i=(\alpha_i-1)/S$, \feng{and the} vacuity \feng{can be calculated} as
\begin{eqnarray}
u_v &=& \frac{K}{S} \xrightarrow[]{S\rightarrow \infty} 0, 
\end{eqnarray}
and the dissonance \feng{can be calculated} as
\begin{eqnarray}
u_{diss} &=& \sum_{i=1}^K \Big(\frac{b_i\sum_{j=1, j\neq i}^K b_j \text{Bal}(b_i, b_j)}{\sum_{j=1, j\neq i}^K b_j} \Big) = \sum_{i=1}^K b_i \\
&=& \sum_{i=1}^K\left(\frac{a_i-1}{\sum_{i=1}^Ka_i}\right) \nonumber \\
&=&\frac{\sum_{i=1}^K a_i-k}{\sum_{i=1}^K a_i} \nonumber\\
&=&1-\frac{K}{S}\xrightarrow[]{S\rightarrow \infty} 1. \nonumber
\end{eqnarray}
Given the expected probability $\hat{p} = [1/K, \ldots, 1/K]^\top$, the entropy \feng{can be calculated based on Dirichlet distribution},
\begin{eqnarray}
u_{en} &=& \mathcal{H}[\hat{p}] = \sum_{i=1}^K \hat{p}_i \log_K \hat{p}_i = 1\feng{,}
\end{eqnarray}
and \feng{the} aleatoric uncertainty is estimated as the expected entropy,
\begin{eqnarray}
u_{alea} &=& \mathbb{E}_{p\sim \text{Dir}(\alpha)}[\mathcal{H}[p]] \\
&=& -\sum_{i=1}^K \frac{\Gamma (S)}{\prod_{i=1}^K\Gamma(\alpha_i)} \int_{S_K} p_i\log_K p_i \prod _{i=1}^K p_i^{\alpha_i-1} d {\bm p} \nonumber \\
&=& - \frac{1}{\ln K}\sum_{i=1}^K \frac{\Gamma (S)}{\prod_{i=1}^K\Gamma(\alpha_i)} \int_{S_K} p_i\ln p_i \prod _{i=1}^K p_i^{\alpha_i-1} d {\bm p} \nonumber \\
&=& -\frac{1}{\ln K}\sum_{i=1}^K \frac{\alpha_i}{S} \frac{\Gamma(S+1)}{\Gamma(\alpha_i +1)\prod_{i'=1, \neq i}^K \Gamma(\alpha_{i'})} \int_{S_K} p_i^{\alpha_i}\ln p_i \prod _{i'=1, \neq i}^K p_{i'}^{\alpha_{i'}-1} d {\bm p} \nonumber \\
&=& \frac{1}{\ln K}\sum_{i=1}^K \frac{\alpha_i}{S} \big(\psi(S+1) - \psi(\alpha_i +1) \big) \nonumber \\
&=& \frac{1}{\ln K}\sum_{i=1}^K \frac{1}{K}(\psi(S+1)-\psi(C + 1)) \nonumber \\
&=& \frac{1}{\ln K} (\psi(S+1)-\psi(C + 1)) \nonumber \\
&=&\frac{1}{\ln K} ( \psi(C + 1) +\sum_{k=C + 1}^S \frac{1}{k}-\psi(C + 1)) \nonumber \\
&=& \frac{1}{\ln K} \sum_{k=C+1}^S \frac{1}{k} \xrightarrow[]{S\rightarrow \infty} 1. \nonumber 
\end{eqnarray}
The epistemic \feng{uncertainty can be calculated via} mutual information,
\begin{eqnarray}
u_{epis} &=& \mathcal{H}[\mathbb{E}_{p\sim \text{Dir}(\alpha)}[p]] - \mathbb{E}_{p\sim \text{Dir}(\alpha)}[\mathcal{H}[p]] \\
&=& \mathcal{H}[\hat{p}] - u_{alea} \nonumber \\ 
&=& 1 - \frac{1}{\ln K} \sum_{k=C+1}^S \frac{1}{k} \xrightarrow[]{S\rightarrow \infty} 0. \nonumber 
\end{eqnarray}
Now we compare aleatoric uncertainty \feng{with} vacuity, 
\begin{eqnarray}
u_{alea} &=& \frac{1}{\ln K} \sum_{k=C+1}^S \frac{1}{k} \\
&=& \frac{1}{\ln K} \sum_{k=C+1}^{CK} \frac{1}{k} \nonumber \\
&=& \frac{\ln(CK+1)-\ln(C+1)}{\ln K} \nonumber \\ 
&=& \frac{\ln(K-\frac{K-1}{C+1})}{\ln K} \nonumber \\ 
&>& \frac{\ln(K-\frac{K-1}{2})}{\ln K} \nonumber \\ 
&=& \frac{\ln(4/K+4/K + 1/2)}{\ln K} \nonumber \\ 
&\ge& \frac{\ln[3(4/K+4/K + 1/2)^{\frac{1}{3}}]}{\ln K} \nonumber \\ 
&=& \frac{\ln 3 + \frac{1}{3} \ln (\frac{K^2}{32})}{\ln K} \nonumber \\ &=& \frac{\ln 3 + \frac{2}{3} \ln K - \frac{1}{3}\ln 32}{\ln K} > \frac{2}{3}. \nonumber
\label{eq_alea2}
\end{eqnarray}
\feng{B}ased on Eq.~\eqref{eq_alea2}, when $C > \frac{3}{2}$, we have
\begin{eqnarray}
u_{alea} > \frac{2}{3} > \frac{1}{C} = u_v
\end{eqnarray}
We have already prove\feng{d} that $u_v>u_{epis}$, when $u_{en}=1$, we have $u_{alea}>u_{diss}$
Therefore, we prove \feng{that} $u_{en}>  u_{alea} > u_{diss} >  u_{v} > u_{epis}$ with $u_{en} = 1, u_{diss}\rightarrow 1, u_{alea}\rightarrow 1, u_{v}\rightarrow 0, u_{epis}\rightarrow 0$
\end{proof}

\begin{lemma}
For all integer $N\ge 2$, we have $\sum_{n=2}^N\frac{1}{n} < \frac{\ln N}{(N+1)\ln (\frac{N+1}{N})}$. \label{lemma1}
\end{lemma}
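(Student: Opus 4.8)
The plan is to recast the claim in a telescoping form and reduce it to the monotonicity of a single auxiliary function. Multiplying through by the positive quantity $(N+1)\ln\frac{N+1}{N}$, the statement is equivalent to
\[
(N+1)\ln\tfrac{N+1}{N}\sum_{n=2}^N\frac1n < \ln N .
\]
The right-hand side telescopes, since $\ln N = \sum_{n=2}^N\bigl(\ln n - \ln(n-1)\bigr) = \sum_{n=2}^N \ln\frac{n}{n-1}$. This structure suggests proving the inequality term by term: it would suffice to show, for every integer $n$ with $2\le n\le N$, that $(N+1)\ln\frac{N+1}{N}\cdot\frac1n < \ln\frac{n}{n-1}$, and then to sum these over $n$ against the telescoped right-hand side.

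Next I would introduce the auxiliary function $\psi(x) = x\ln\frac{x}{x-1}$ on $(1,\infty)$ and observe that the desired term-by-term inequality is precisely $\psi(N+1) < \psi(n)$, after rewriting $(N+1)\ln\frac{N+1}{N} = \psi(N+1)$ and $n\ln\frac{n}{n-1} = \psi(n)$. Since $n \le N < N+1$, this would follow at once once $\psi$ is shown to be strictly decreasing. Differentiating gives $\psi'(x) = \ln\frac{x}{x-1} - \frac{1}{x-1}$; substituting $t = \frac{1}{x-1} > 0$ turns this into $\psi'(x) = \ln(1+t) - t$, which is negative by the elementary inequality $\ln(1+t) < t$ valid for all $t>0$. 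Hence $\psi$ is strictly decreasing on $(1,\infty)$, yielding $\psi(N+1) < \psi(n)$ for every $2 \le n \le N$, and summing the strict term-wise inequalities produces the claim.

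The key conceptual obstacle is recognizing that the naive estimates point the wrong way: one has $\sum_{n=2}^N\frac1n < \ln N$ together with $(N+1)\ln\frac{N+1}{N} > 1$, so their product \emph{overshoots} $\ln N$, and these two crude bounds cannot simply be combined. The essential idea is instead to distribute the factor $(N+1)\ln\frac{N+1}{N}$ across the telescoping decomposition of $\ln N$ and to notice that both sides are controlled by the single function $\psi$, at which point the entire lemma collapses to the monotonicity of $\psi$, which itself rests only on $\ln(1+t) < t$. The one point requiring a little care is the direction of the comparison, namely ensuring that the argument $N+1$ exceeds every index $n \le N$ so that $\psi(N+1)$ is genuinely the smallest relevant value of $\psi$; this is immediate, so I expect no serious difficulty beyond spotting the reduction.
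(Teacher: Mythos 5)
Your proof is correct, but it takes a structurally different route from the paper's. The paper proceeds by induction on $N$: the induction step uses the algebraic identity $\frac{1}{K+1} + \frac{\ln K}{(K+1)\ln\frac{K+1}{K}} = \frac{\ln(K+1)}{(K+1)\ln\frac{K+1}{K}}$ and then invokes the monotone decrease of $g(x) = (x+1)\ln\frac{x+1}{x}$ to pass from $g(K)$ to $g(K+1)$ in the denominator. You avoid induction entirely: you telescope $\ln N = \sum_{n=2}^N \ln\frac{n}{n-1}$ and compare term by term, reducing the whole lemma to the single inequality $\psi(N+1) < \psi(n)$ for $2 \le n \le N$, where $\psi(x) = x\ln\frac{x}{x-1}$ is your auxiliary function (incidentally, you may wish to rename it, since the paper uses $\psi$ for the digamma function elsewhere). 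It is worth noting that the two arguments share exactly the same engine: your auxiliary function is the paper's $g$ shifted by one, i.e.\ $\psi(x+1) = g(x)$, and both monotonicity proofs come down to the identical derivative computation, namely that $\ln(1+t) - t < 0$ for $t > 0$. What your route buys is a one-shot, induction-free argument in which the role of the monotonicity is fully transparent: each harmonic term $\frac{1}{n}$, weighted by the smallest relevant value of your decreasing function, stays strictly below the corresponding logarithmic increment $\ln\frac{n}{n-1}$, and summing finishes the proof. What the paper's route buys is that each step requires only one application of monotonicity plus elementary algebra, making the verification more mechanical; your version instead requires spotting the telescoping decomposition and the correct term-by-term pairing, which is the genuinely non-obvious idea you correctly identified.
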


\begin{proof}
We will prove by induction that, for all integer $N\ge 2$,
\begin{eqnarray}
\sum_{n=2}^N\frac{1}{n} < \frac{\ln N}{(N+1)\ln (\frac{N+1}{N})} \label{statement1}.
\end{eqnarray}
\textit{Base case}: \feng{W}hen $N=2$\feng{,} we have $ \frac{1}{2} < \frac{\ln 2}{3 \ln \frac{3}{2}}$ \feng{and} Eq.~\eqref{statement1} is true for $N=2$.

\textit{Induction step}: \feng{L}et \feng{the} integer $K\ge 2$ is given and suppose Eq.~\eqref{statement1} is true for $N=K$, then
\begin{eqnarray}
\sum_{k=2}^{K+1}\frac{1}{k}= \frac{1}{K + 1} + \sum_{k=2}^{K}\frac{1}{k} 
< \frac{1}{K + 1} + \frac{\ln K}{(K+1)\ln (\frac{K+1}{K})} 
= \frac{\ln(K + 1)}{(K + 1)\ln (\frac{K + 1}{K})}. 
\label{step1}
\end{eqnarray}

Denote that $g(x) = (x+1)\ln (\frac{x+1}{x})$ with $x> 2$\feng{.} \feng{W}e get its derivative, $g'(x) = \ln (1+ \frac{1}{x})- \frac{1}{x}< 0$\feng{,} such that $g(x)$ is monotonically decreasing, which results in $g(K)> g(K + 1)$\feng{.} \feng{B}ased on Eq.~\eqref{step1} we have,
\begin{eqnarray}
\sum_{k=2}^{K+1}\frac{1}{k}< \frac{\ln (K +1)}{g(K)} < \frac{\ln (K + 1)}{g(K + 1)} = \frac{\ln(K + 1)}{(K + 2)\ln (\frac{K + 2}{K+1})}. 
\end{eqnarray}
Thus, Eq.~\eqref{statement1} holds for $N=K+1$, and the proof of the induction step is complete.

\textit{Conclusion}: By the principle of induction, Eq.~\eqref{statement1} is true for all integer $N\ge 2$.
\end{proof}

\subsection{Proposition 1's Proof}

\primeproposition*

\textbf{Interpretation}. From the above proposition, if a testing node is too distant (far away) from training nodes, the vacuity increases, indicating that an OOD node is expected to have \feng{a} high vacuity \feng{value}.

\begin{proof}
Let ${\bm y} = [y_1, \ldots, y_L]$ \feng{be} the label vector for training nodes. \feng{B}ased on GKDE, the evidence contribution for the node $i$ and a training node $l\in \{1, \ldots, L|\}$ is ${\bm h} (y_l, d_{il})=[h_1(y_l, d_{il}), \ldots, h_K(y_l, d_{il})]$, where
\begin{eqnarray}
h_k(y_l, d_{il}) = \begin{cases}0 & y_l \neq k\\ g(d_{il}) = \frac{1}{\sigma \sqrt{2\pi}}\exp({-\frac{{d_{il}}^2}{2\sigma^2}})  & y_l = k \end{cases} \label{po3_hk},
\end{eqnarray}
and the prior evidence \feng{can be estimated} based GKDE: 
\begin{eqnarray}
\hat{{\bm e}}_i = \sum_{m=1}^{L} \sum_{k=1}^K h_k(y_l, d_{il}),
\end{eqnarray}
\feng{where} $\hat{{\bm e}}_i=[e_{i1},...,e_{iK}]$. Since each training node only contributes the same evidence \feng{based on} its label based on Eq.~\eqref{po3_hk}, the total evidence is estimated by all \feng{the} contributing evidence as
\begin{eqnarray}
\sum_{k=1}^K e_{ik} = \sum_{m=1}^{L} \frac{1}{\sigma \sqrt{2\pi}} \exp({-\frac{{d_{il}}^2}{2\sigma^2}}) , \quad \sum_{k=1}^K e_{jk} = \sum_{m=1}^{L} \frac{1}{\sigma \sqrt{2\pi}} \exp({-\frac{{d_{jl}}^2}{2\sigma^2}}),
\end{eqnarray}
where the vacuity values for node $i$ and node $j$ based on GKDE are,
\begin{eqnarray}
\label{eq:i-j-vacuity}
\hat{u}_{v_i} = \frac{K}{\sum_{k=1}^K e_{ik} + K}, \quad \hat{u}_{v_j} = \frac{K}{\sum_{k=1}^K e_{jk} + K}.
\end{eqnarray}

Now, we prove Eq.~\eqref{eq:i-j-vacuity} above.  If $d_{il}\ge d_{jl}$ for $\forall l\in \{1, \ldots, L\}$, we have
\begin{eqnarray}
\sum_{k=1}^K e_{ik} &=& \sum_{m=1}^{L} \frac{1}{\sigma \sqrt{2\pi}} \exp({-\frac{{d_{il}}^2}{2\sigma^2}}) \\
&\le& \sum_{m=1}^{L} \frac{1}{\sigma \sqrt{2\pi}} \exp({-\frac{{d_{jl}}^2}{2\sigma^2}}) \nonumber\\
&=& \sum_{k=1}^K e_{jk}, \nonumber
\end{eqnarray}
such that
\begin{eqnarray}
\hat{u}_{v_i} = \frac{K}{\sum_{k=1}^K e_{ik} + K} \ge \frac{K}{\sum_{k=1}^K e_{jk} + K} = \hat{u}_{v_j}. 
\end{eqnarray}
\end{proof}

\section{Additional Experimental Details}
\subsection{Source code}
The source code and datasets are accessible at \href{https://github.com/zxj32/uncertainty-GNN}{\color{magenta}{https://github.com/zxj32/uncertainty-GNN}}

\subsection{Description of Datasets}

\begin{table*}[ht!]%{l}{0.8\textwidth}
\small
\caption{Description of datasets and their experimental setup for the node classification prediction.}
\centering
\vspace{-0mm}
  \begin{tabular}{lcccccc}
    \toprule
     & \textbf{Cora} & \textbf{Citeseer}  & \textbf{Pubmed} & \textbf{Co. Physics} & \textbf{Ama.Computer} & \textbf{Ama.Photo} \\
    \midrule
    \textbf{\#Nodes}   & 2,708 & 3,327 &  19,717 &  34, 493 & 13, 381 & 7, 487 \\
    \textbf{\#Edges}   &  5,429 &   4,732 &  44,338 &  282, 455 & 259, 159 & 126, 530   \\
    \textbf{\#Classes}  & 7 & 6 &  3  & 5 & 10 & 8  \\
    \textbf{\#Features}  & 1,433 & 3,703 &  500 &  8,415 & 767 & 745   \\
    \textbf{\#Training nodes}  & 140 & 120 &  60 &  100 & 200 & 160   \\
    \textbf{\#Validation nodes}  & 500 & 500 &  500 &  500  &  500 &  500  \\
    \textbf{\#Test nodes}  & 1,000 & 1,000 &  1,000 &  1000  &  1,000 &  1000 \\
    \bottomrule
  \end{tabular}
   \label{table:datasets-description}

\vspace{-3mm}  
\end{table*}

\noindent {\bf Cora, Citeseer, and Pubmed}~\cite{sen2008collective}: These are citation network datasets, where \feng{each} network is a directed network \feng{in which} a node represents a document and an edge is a citation link, meaning that there exists an edge when $A$ document cites $B$ document, or vice-versa with a direction. Each node's feature vector contains a bag-of-words representation of a document. For simplicity, we don't discriminate the direction of links and treat citation links as undirected edges and construct a binary, symmetric adjacency matrix $\mathbf{A}$. Each node is labeled with the class to which it belongs. 

\noindent {\bf Coauthor Physics, Amazon Computers, and Amazon Photo }~\cite{shchur2018pitfalls}:
Coauthor Physics is the dataset for co-authorship graphs based on the Microsoft Academic Graph from the KDD Cup 2016 Challenge\footnote{KDD Cup 2016 Dataset: Online Available at \url{https://kddcup2016.azurewebsites.net/}}. In the graphs, a node is an author and an edge exists when two authors co-author a paper. A node's features represent the keywords of its papers and the node's class label indicates its most active field of study.  Amazon Computers and Amazon Photo are the segments of an Amazon co-purchase graph~\cite{mcauley2015image}, where a node is a good (i.e., product), an edge exists when two goods are frequently bought together. A node's features are bag-of-words representation of product reviews and the node's class label is the product category.

For all the used datasets, we deal with undirected graphs with 20 training nodes for each category. We chose the same dataset splits as in~\cite{yang2016revisiting} with an additional validation node set of 500 labeled examples for the hyperparameter obtained from the citation datasets, and followed the same dataset splits in~\cite{shchur2018pitfalls} for Coauthor Physics, Amazon Computer, and Amazon Photo datasets, for the fair comparison\footnote{The source code and datasets are accessible at \href{https://github.com/zxj32/uncertainty-GNN}{\color{magenta}{https://github.com/zxj32/uncertainty-GNN}}}. 

\textbf{Metric}: We use\feng{d} the following metrics for our experiments:
\vspace{-1mm}
\begin{itemize}[leftmargin=*]
\item {\em Area Under Receiver Operating Characteristics (AUROC)}: AUROC shows the area under the curve where FPR (false positive rate) is in $x$-axis and TPR (true positive rate) is in $y$-axis. It can be interpreted as the probability that a positive example is assigned a higher detection score than a negative example\cite{fawcett2006introduction}.
A perfect detector corresponds to an AUROC score of 100\%.
\item {\em Area Under Precision-Prediction Curve (AUPR)}: The PR curve is a graph showing the precision=TP/(TP+FP) and recall=TP/(TP+FN) against each other,and AUPR denote\feng{s} the area under the precision-recall curve. The ideal case is when Precision is 1 and Recall is 1. 
\end{itemize}

\subsection{Experimental Setup for Out-of-Distribution (OOD) Detection} 
For OOD detection on semi-supervised node classification, we randomly selected 1-4 categories as OOD categories and trained the models only based on training nodes of the other categories. In this setting, we still train\feng{ed} a model for semi-supervised node classification task, but only part of node categories \feng{were} not \feng{used} for training. Hence, we suppose that our model only outputs partial categories (as we don't know the OOD category), see Table~\ref{tab:ood_data}.  For example, Cora dataset, we train\feng{ed} the model with 80 nodes (20 nodes for each category) with the predictions of 4 categories. Positive ratio is the ratio of out-of-distribution nodes among on all test nodes.
\vspace{-2mm}
\begin{table*}[ht]
\caption{Description of datasets and their experimental setup for the OOD detection.}
\centering
\scriptsize
  \begin{tabular}{lcccccc}
    \toprule
    Dataset & \textbf{Cora} & \textbf{Citeseer}  & \textbf{Pubmed} & \textbf{Co.Physics} & \textbf{Ama.Computer}  & \textbf{Ama.Photo} \\
    \midrule
    \textbf{Number of training categories}   &  4 &  3 &  2 & 3  &5  &  4 \\
    \textbf{Training nodes}   &  80 &  60 &  40 & 60  &100   &80   \\
    \textbf{Test nodes} & 1000& 1000 & 1000 & 1000& 1000  &  1000  \\
    \textbf{Positive ratio}  & 38\% & 55\% & 40.4\% & 45.1\%  &  48.1\% &51.1\% \\
    \bottomrule
  \end{tabular}
    \label{tab:ood_data}
\vspace{-3mm}
\end{table*}

\subsection{Baseline Setting}
In experiment part, we consider\feng{ed} 4 baselines. For GCN, we use{\feng d} the same hyper-parameters as~\cite{kipf2017semi}. For EDL-GCN, we use{\feng d} the same hyper-parameters as GCN, and replace{\feng d} softmax layer to activation layer (Relu) with squares loss~\cite{sensoy2018evidential}. For DPN-GCN, we use{\feng d} the same hyper-parameters as GCN, \feng{a}nd change{\feng d} the softmax layer to activation layer (exponential)\feng{.} \feng{N}ote that \feng{as} we can not generate OOD node, we only use{\feng d} in\feng{-}distribution loss of (see Eq.12 in~\cite{malinin2018predictive}) and ignore{\feng d} the OOD part loss. For Drop-GCN,  we use{\feng d} the same hyper-parameters as GCN, and set Monte Carlo sampling times $M=100$, dropout rate equal to 0.5.

\subsection{Time Complexity Analysis}
S-BGCN has a similar time complexity with GCN while S-BGCN-T has the double complexity of GCN. For a given network where $|\mathbb{V}|$ is the number of nodes, $|\mathbb{E}|$ is the number of edges, $C$ is the number of dimensions of the input feature vector for every node, $F$ is the number of features for the output layer, and $M$ is Monte Carlo sampling times. 

\begin{table*}[ht]
\caption{Big-O time complexity of our method and baseline GCN.}
\centering
\scriptsize
  \begin{tabular}{lccccc}
    \toprule
    Dataset & \textbf{GCN} & \textbf{S-GCN}  & \textbf{S-BGCN} & \textbf{S-BGCN-T} & \textbf{S-BGCN-T-K}   \\
    \midrule
    \textbf{Time Complexity (Train)} & $O(|\mathbb{E}|CF)$ &  $O(|\mathbb{E}|CF)$ & $O(2|\mathbb{E}|CF)$ & $O(2|\mathbb{E}|CF)$  &$O(2|\mathbb{E}|CF)$   \\
    \textbf{Time Complexity (Test)}  & $O(|\mathbb{E}|CF)$ &  $O(|\mathbb{E}|CF)$ & $O(M|\mathbb{E}|CF)$ & $O(M|\mathbb{E}|CF)$ &$O(M|\mathbb{E}|CF)$  \\
    \bottomrule
  \end{tabular}
\label{tab:complexity}
\end{table*}

\subsection{Model Setups for semi-supervised node classification}
Our models \feng{were} initialized using Glorot initialization~\cite{glorot2010understanding} and trained to minimize loss using the Adam SGD optimizer~\cite{kingma2014adam}. For the S-BGCN-T-K model, we use{\feng d} the {\em early stopping strategy}~\cite{shchur2018pitfalls} on Coauthor Physics, Amazon Computer and Amazon Photo datasets while {\em non-early stopping strategy} \feng{was} used in citation datasets (i.e., Cora, Citeseer and Pubmed). We set bandwidth $\sigma=1$ for all datasets in GKDE, and set trade off parameters $\lambda_1=0.001$ for misclassification detection, $\lambda_1=0.1$ for OOD detection and $\lambda_2=\min(1,t/200)$ (where $t$ is the index of a current training epoch) for both task; other hyperparameter configurations are summarized in Table~\ref{table:BGCN-T}. 

For semi-supervised node classification, we use{\feng d} 50 random weight initialization for our models on Citation network datasets. For Coauthor Physics, Amazon Computer and Amazon Photo datasets, we report{\feng ed} the result based on 10 random train/validation/test splits.  In both effect of uncertainty on misclassification and the OOD detection, we report{\feng ed} the AUPR and AUROC results in percent averaged over 50 times of randomly chosen 1000 test nodes in all of test sets (except training or validation set) for all models tested on the citation datasets. For S-BGCN-T-K model in these tasks, we use{\feng d} the same hyperparameter configurations as in Table~\ref{table:BGCN-T}, except S-BGCN-T-K Epistemic using 10,000 epochs to obtain the best result.

\begin{table}[ht]
  \caption{Hyperparameter configurations of S-BGCN-T-K model}
\centering
\scriptsize
  \begin{tabular}{ l c c c c c c } 
    \toprule
 & \textbf{Cora} & \textbf{Citeseer}  & \textbf{Pubmed}   & \textbf{Co.Physics} & \textbf{Ama.Computer}  & \textbf{Ama.Photo}\\
    \midrule
 \textbf{Hidden units}   & 16 & 16 &  16 &  64 & 64 & 64   \\
 \textbf{Learning rate}  &  0.01 & 0.01  & 0.01  &  0.01 & 0.01  & 0.01    \\
 \textbf{Dropout}  & 0.5 & 0.5 &  0.5  & 0.1 & 0.2 & 0.2 \\
 \textbf{$L_2$ reg.strength}  & 0.0005 & 0.0005 & 0.0005  &  0.001 &  0.0001 & 0.0001   \\
 \textbf{Monte-Carlo samples}  & 100 & 100 &  100  &  100 & 100 & 100  \\
 \textbf{Max epoch}  & 200 & 200 &  200 &  100000 & 100000 & 100000  \\
    \bottomrule
\end{tabular}
  \label{table:BGCN-T}
 
\end{table}

\subsection{Pseudo code for Our Algorithms}
% \removelatexerror
\begin{algorithm}[ht]
\small{
\DontPrintSemicolon
\KwIn{$\mathbb{G} = (\mathbb{V}, \mathbb{E}, \rr)$ and $\y_{\mathbb{L}}$}
\KwOut{$\p_{\mathbb{V} \setminus \mathbb{L}}$, $\uu_{\mathbb{V} \setminus \mathbb{L}}$} 
\SetKwBlock{Begin}{function}{end function}
%\Begin($\text{generateCosts} {(} EV_{i,v,r} {)}$)
{
  $\ell=0$; \;
  Set hyper-parameters $\eta, \lambda_1, \lambda_2$; \;
  
  Initialize the parameters $\gamma, \beta$; \;
  
  Calculate the prior Dirichlet distribution $\text{Dir}(\hat{\alpha})$; \;
  Pretrain the teacher network to get $\text{Prob}(\y |\hat{\p})$; \;
 \Repeat{convergence }
 {
 
 Forward pass to compute $\bm{\alpha}$, $\text{Prob}(\p_i |A, \rr; \mathcal{G})$ for $i\in\mathbb{V}$;\;
 
 Compute joint probability $\text{Prob}(\y |A, \rr; \mathcal{G})$;\;
 
 Backward pass via the chain-rule the calculate the sub-gradient gradient: $g^{(\ell)} = \nabla_\Theta \mathcal{L}(\Theta)$ \;
 
 Update parameters using step size $\eta$ via 
  $\Theta^{(\ell+1)} = \Theta^{(\ell)} - \eta \cdot g^{(\ell)}$\; 

  $\ell = \ell + 1$;\;
 }
 %$e^{(t)} = \textbf{B}^{(t)} - \hat{\textbf{B}}^{(t)}$, 
 %$r^{(t)} = \textbf{U}^{(t)} - \hat{\textbf{U}}^{(t)} $\;
 Calculate $\p_{\mathbb{V} \setminus \mathbb{L}}$, $\uu_{\mathbb{V} \setminus \mathbb{L}}$ 

\Return{$\p_{\mathbb{V} \setminus \mathbb{L}}$, $\uu_{\mathbb{V} \setminus \mathbb{L}}$}
}
\caption{S-BGCN-T-K}\label{algorithm}
}
\end{algorithm}

\subsection{Bayesian Inference with Dropout} \label{subsec:bayes-inf-dropout}
\vspace{-2mm}
The marginalization in Eq.(8) (in main paper) is generally intractable. A dropout technique is used to obtain an approximate solution and use samples from the posterior distribution of models~\cite{gal2016dropout}. Hence, we adopt{\feng ed} a dropout technique in~\cite{gal2015bayesian} for variational inference in Bayesian convolutional neural networks where Bernoulli distributions are assumed over the network's weights. This dropout technique allows us to perform probabilistic inference over our Bayesian DL framework using GNNs.  For Bayesian inference, we identif{\feng ied} a posterior distribution over the network's weights, given the input graph $\mathcal{D}$ and observed labels $\y_{\mathbb{L}}$ by
%\begin{eqnarray}
$\text{Prob}(\bm{\theta} | \mathcal{D})$,
%\end{eqnarray}
where $\bm{\theta}=\{\W_1, \ldots, \W_L, b_1,...,b_L\}$, $L$ is the total number of layers and $W_i$ refers to the GNN's weight matrices of dimensions $D_i\times D_{i-1}$, and $b_i$ is a bias vector of dimensions $D_i$ for layer $i=1, \cdots,L$.

\vspace{-1mm}
Since the posterior distribution is intractable, we use{\feng d} a \textbf{variational inference} to learn $q(\bm{\theta})$, a distribution over matrices whose
columns are randomly set to zero, approximating the intractable posterior by minimizing the Kullback-Leibler (KL)-divergence between this approximated distribution and the full posterior, which is given by:
\begin{equation}
\text{KL}(q(\bm{\theta})\| \text{Prob}(\bm{\theta} | \mathcal{D})) \label{KL-VI}
\end{equation}
We define $\W_i$ in $q(\bm{\theta})$ by: 
\begin{eqnarray}\small
&\W_i = \M_i \text{diag}([z_{ij}]_{j=1}^{D_i}), 
&z_{ij} \sim \text{Bernoulli}(d_i) \text{ for } i=1, \ldots, L, j=1, \ldots, D_{i-1} 
\vspace{-3mm}
\end{eqnarray}
where $\bm{\gamma}=\{\M_1, \ldots, \M_L, \m_1, \ldots, \m_L\}$
are the variational parameters, $\M_i \in \mathbb{R}^{D_i\times D_{i-1}}$, $\m_i\in \mathbb{R}^{D_i}$, and $\textbf{d}=\{d_1, \ldots, d_L\}$ is the dropout probabilities
with $z_{ij}$ of Bernoulli distributed random variables. The binary variable $z_{ij} = 0$ corresponds to unit $j$ in layer $i-1$ being dropped out as an input to layer $i$. We can obtain the approximate model of the Gaussian process from~\cite{gal2015bayesian}. The dropout probabilities, $d_i$'s, can be optimized or fixed~\cite{kendall2015bayesian}. For simplicity, we fix{\feng ed} $d_i$'s in our experiments, as it is beyond the scope of our study.  In~\cite{gal2015bayesian}, the minimization of the cross entropy (or square error) loss function is proven to minimize the KL-divergence (see Eq.~\eqref{KL-VI}). Therefore, training the GNN model with stochastic gradient descent enables learning of an approximated distribution of weights, which provides good explainability of data and prevents overfitting.

\vspace{-1mm}
For the dropout inference, we performed training \feng{on} a DL model with dropout before every weight layer and dropout at a test time to sample from the approximate posterior (i.e., stochastic forward passes, a.k.a. Monte Carlo dropout; see Eq.~\eqref{MC_dropout}). 
At the test stage, we infer the joint probability by:
\begin{eqnarray}\small 
p(\y | A, \rr; \mathcal{D}) = \int \int \text{Prob}(\y | \p) \text{Prob}(\p |A, \rr; \bm{\theta} ) \text{Prob}(\bm{\theta} | \mathcal{D}) d \p d\bm{\theta}  \nonumber \\
\approx \frac{1}{M}\sum\nolimits_{m=1}^M \int \text{Prob}(\y | \p) \text{Prob}(\p |A, \rr; \bm{\theta}^{(m)} ) d \p, \quad \bm{\theta}^{(m)} \sim q(\bm{\theta}), 
% = \frac{1}{M}\sum_{m=1}^M \prod_{i=1}^N \frac{\bm{\alpha}_i^{(m)}}{S_i^{(m)}}
 \label{MC_dropout}
\end{eqnarray}
where $M$ is Monte Carlo sampling times.
We can also infer the Dirichlet parameters $\bm{\alpha}$ as:
\begin{equation}
    \bm{\alpha} \approx \frac{1}{M}\sum\nolimits_{m=1}^M f(A, \rr, \bm{\theta}^{(m)}), \quad \bm{\theta}^{(m)}\sim q(\bm{\theta}). 
\label{get_alpha}
\end{equation}

\section{Additional Experimental Results}
In addition to the uncertainty analysis in Section 5, we also conducted additional experiments. {\bf First}, we conducted an ablation experiment for each component (such as GKDE, Teacher network, Subjective framework and Bayesian framework) we proposed. {\bf Second}, we provide additional uncertainty visualization results in network node classifications for Citeseer dataset.  To clearly understand the effect of different types of uncertainty in classification accuracy and OOD, we used the AUROC and AUPR curves for all types of models considered in this work.

\subsection{Ablation Experiments}
We conducted an additional experiments in order to clearly demonstrate the contributions of the key technical components, including a teacher Network, Graph kernel Dirichlet Estimation (GKDE) and subjective Bayesian framework.  The key findings obtained from this experiment are: (1) The teacher Network can further improve node classification accuracy (i.e., 0.2\%  - 1.5\% increase, as shown in Table~\ref{Table: AUROC_AUPR:ood3}); and (2) GKDE (Graph-Based Kernel Dirichlet Distribution Estimation) using the uncertainty estimates can enhance OOD detection (i.e., 4\% - 30\% increase, as shown in Table~\ref{Table: AUROC_AUPR:ood2}).

\begin{table*}[th!]
\scriptsize
\caption{Ablation experiment on AUROC and AUPR for the Misclassification Detection.}
\label{Table: AUROC_AUPR:ood3}
\vspace{1mm}
\centering
\begin{tabular}{c||c|ccccc|ccccc|c}
\hline
\multirow{2}{*}{Data} & \multirow{2}{*}{Model} & \multicolumn{5}{c|}{AUROC} & \multicolumn{5}{c}{AUPR} \\  
                  &                   & Va.\tnote{*}& Dis. & Al. & Ep. &En. & Va. & Dis. &  Al. & Ep.  &En. & Acc \\ \hline
\multirow{4}{*}{Cora} & S-BGCN-T-K & 70.6 & 82.4 & 75.3 & 68.8 & 77.7&  90.3  & \textbf{95.4} & 92.4 & 87.8 &93.4 & 82.0 \\ 
 & S-BGCN-T &  70.8  &  \textbf{82.5}  &75.3  & 68.9 & 77.8  &90.4  &  \textbf{95.4}  &  92.6  & 88.0 &93.4 & \textbf{82.2} \\
                  & S-BGCN &  69.8  &  81.4  & 73.9   &  66.7  & 76.9  & 89.4  &  94.3  &  92.3  & 88.0 &93.1 & 81.2 \\  
                   & S-GCN &  70.2  &  81.5  & -   & -  & 76.9  & 90.0  &  94.6  &  -  & - &93.6 & 81.5 \\
                  \hline
\multirow{4}{*}{Citeseer} &  S-BGCN-T-K & 65.4& \textbf{74.0}& 67.2& 60.7&  70.0& 79.8 & \textbf{85.6} & 82.2 & 75.2 &83.5 & 71.0 \\ 
                  &  S-BGCN-T & 65.4& 73.9& 67.1& 60.7& 70.1& 79.6 & 85.5 & 82.1 & 75.2 & 83.5 &\textbf{71.3} \\ 
                  &     S-BGCN &   63.9 &  72.1  & 66.1 & 58.9  & 69.2& 78.4 & 83.8   &80.6 &75.6&82.3&70.6  \\ 
                  &                  S-GCN &  64.9  &  71.9  &  -  & - & 69.4 &  79.5  & 84.2  &  -  & -   & 82.5& 71.0  \\ \hline
\multirow{4}{*}{Pubmed} &  S-BGCN-T-K & 63.1 & \textbf{69.9} & 66.5 & 65.3& 68.1& 85.6 & 90.8 & 88.8& 86.1   &89.2 & \textbf{79.3} \\
                  &     S-BGCN-T & 63.2 & \textbf{69.9} & 66.6 & 65.3 & 64.8&  85.6 & \textbf{90.9} & 88.9& 86.0   &89.3 & 79.2 \\
                  &                  S-BGCN &  62.7  &   68.1 & 66.1 &  64.4 & 68.0&  85.4  &  90.5& 88.6 & 85.6   &  89.2&78.8  \\ 
                  &                  S-GCN &  62.9  &   69.5 & -   &  - & 68.0&  85.3  &  90.4 & -   &  -  & 89.2&79.1 \\ \hline
\multirow{4}{*}{Amazon Photo} &  S-BGCN-T-K & 66.0 & 89.3& 83.0& 83.4  & 83.2&   95.4 & \textbf{98.9}& 98.4& 98.1 & 98.4 & 92.0\\
                  &     S-BGCN-T &  66.1 & 89.3& 83.1& 83.5  & 83.3&   95.6 & 99.0& 98.4& 98.2 & 98.4&\textbf{92.3}\\
                  &                  S-BGCN &   68.6  & \textbf{ 93.6} & 90.6 &  83.6 & 90.6&  90.4  &  98.1& 97.3 & 95.8 &  97.3&81.0 \\ 
                  &                  S-GCN &    -  &   - & -   &  - & 86.7&  -  &   - & -   &  -  & -&98.4 \\ \hline
\multirow{4}{*}{Amazon Computer} &  S-BGCN-T-K & 65.0 & 87.8& 83.3& 79.6  & 83.6& 89.4 & 96.3 & 95.0& 94.2 &  95.0 & 84.0\\
                  &     S-BGCN-T & 65.2 & 88.0& 83.4& 79.7  & 83.6& 89.4 & \textbf{96.5} & 95.0& 94.5 & 95.1 &\textbf{84.1} \\
                  &                  S-BGCN &  63.7  &  \textbf{89.1} & 84.3 &  76.1 & 84.4&  84.9  &  95.7& 93.9 & 91.4   &  93.9&76.1 \\ 
                  &                  S-GCN &   -  &   - & -   &  - & 81.5&  -  &   - & -   &  -  & -&95.2 \\ \hline
\multirow{4}{*}{Coauthor Physics} &  S-BGCN-T-K & 80.2 & 91.4& 87.5& 81.7 & 87.6& 98.3 &\textbf{99.4} & 99.0& 98.4 & 98.9 & 93.0\\
                  &     S-BGCN-T & 80.4 & \textbf{91.5}& 87.6& 81.7  & 87.6& 98.3 & \textbf{99.4} & 99.0& 98.6 &  99.0 & \textbf{93.2} \\
                  &                  S-BGCN &  79.6   &  90.5 & 86.3 &  81.2 & 86.4&  98.0  &  99.2& 98.8 & 98.3   &  98.8&92.9 \\ 
                  &                  S-GCN &  89.1   &  89.0 & -   &  - & 89.2&  99.0  & 99.0 & -   &  -  & 99.0&92.9 \\ \hline
\end{tabular}
\begin{tablenotes}\scriptsize
\centering
\item[*] Va.: Vacuity, Dis.: Dissonance, Al.: Aleatoric, Ep.: Epistemic, En.: Entropy 
\end{tablenotes}
\end{table*}

\begin{table*}[th!]
\scriptsize
\caption{Ablation experiment on  AUROC and AUPR for the OOD Detection.}
\vspace{1mm}
\centering
\begin{tabular}{c||c|ccccc|ccccc}
\hline
\multirow{2}{*}{Data} & \multirow{2}{*}{Model} & \multicolumn{5}{c|}{AUROC} & \multicolumn{5}{c}{AUPR} \\  
                  &                   & Va.\tnote{*}& Dis. & Al. & Ep. &En. & Va. & Dis. &  Al. & Ep.  &En. \\ \hline
\multirow{4}{*}{Cora} & S-BGCN-T-K & \textbf{87.6} & 75.5 & 85.5 & 70.8  & 84.8&  \textbf{78.4} & 49.0 & 75.3 & 44.5 &  73.1  \\ 
                  &    S-BGCN-T &            84.5 & 81.2 & 83.5 & 71.8 & 83.5&    74.4 & 53.4 & 75.8 & 46.8 & 71.7  \\ 
                  &                  S-BGCN &  76.3  &  79.3  & 81.5   &  70.5  & 80.6  & 61.3  & 55.8  &  68.9  & 44.2 &  65.3  \\  
                  &                  S-GCN & 75.0  &  78.2  &  -  &  -    & 79.4&  60.1  &  54.5  &  -  &  -  & 65.3 \\ \hline
\multirow{4}{*}{Citeseer} &  S-BGCN-T-K & \textbf{84.8} &55.2&78.4 & 55.1 & 74.0& \textbf{86.8} & 54.1 & 80.8 & 55.8 & 74.0  \\ 
                  &  S-BGCN-T &                 78.6 &59.6&73.9 & 56.1 & 69.3&         79.8 & 57.4 & 76.4 & 57.8 & 69.3  \\ 
                  &                  S-BGCN &   72.7 &  63.9  & 72.4 & 61.4 &  70.5 &   73.0  & 62.7& 74.5 & 60.8   & 71.6  \\ 
                  &                  SGCN &  72.0 &  62.8  &  -  &  -   & 70.0 &  71.4  &  61.3  &  -  & -   & 70.5  \\ \hline
\multirow{4}{*}{Pubmed} &  S-BGCN-T-K & \textbf{74.6} &67.9& 71.8 & 59.2& 72.2& \textbf{69.6} & 52.9 & 63.6& 44.0   &56.5  \\
                  &     S-BGCN-T & 71.8 &68.6& 70.0 & 60.1 & 70.8&          65.7 & 53.9 & 61.8& 46.0   & 55.1  \\
                  &                  S-BGCN &  70.8 &  68.2  &  70.3 & 60.8 & 68.0& 65.4  &  53.2& 62.8 & 46.7   &  55.4  \\ 
                  &                  S-GCN &    71.4  &   68.8 & -   &  - & 69.7&  66.3  &   54.9 & -   &  -  &57.5 \\ \hline
% \multirow{4}{*}{Amazon Photo} &  S-BGCN-T-K & \textbf{93.4} & 76.4& 91.4& 32.2 & 91.4&         \textbf{ 94.8} & 68.0 & 92.3& 42.3   &  92.5  \\
%                   &     S-BGCN-T & 64.0 & 77.5& 79.9 & 52.6 & 79.8&          67.0 & 75.3 & 82.0& 53.7   &  81.9  \\
%                   &                  S-BGCN &  67.0 &  84.3  &  83.5 & 63.7 &  83.5&  67.3  &  82.6& 82.6 & 60.2   &  82.4 \\ 
%                   &                  S-GCN &    67.0  &   84.8 & -   &  - & 83.5&  67.3  &   82.6 & -   &  -&82.6 \\ \hline
\multirow{4}{*}{Amazon Photo} &  S-BGCN-T-K & \textbf{93.4} & 76.4& 91.4& 32.2  & 91.4&         \textbf{ 94.8} & 68.0 & 92.3& 42.3   & 92.5   \\
                  &     S-BGCN-T & 64.0 & 77.5& 79.9 & 52.6 & 79.8&          67.0 & 75.3 & 82.0& 53.7   & 81.9  \\
                  & S-BGCN & 63.0 & 76.6& 79.8 & 52.7 & 79.7&          66.5 & 75.1 & 82.1& 53.9   & 81.7 \\ 
                  & S-GCN & 64.0 & 77.1& - & - & 79.6&          67.0 & 74.9 & -& -   & 81.6 \\ \hline
\multirow{4}{*}{Amazon Computer} &  S-BGCN-T-K & \textbf{82.3} & 76.6& 80.9& 55.4 & 80.9& \textbf{70.5} & 52.8 & 60.9& 35.9 &  60.6  \\
                  &     S-BGCN-T & 53.7 & 70.5& 70.4 & 69.9  & 70.1&    33.6 & 43.9 & 46.0& 46.8 & 45.9  \\
                  &                  S-BGCN &  56.9  &  75.3  &  74.1 & 73.7 &  74.1&33.7  &  46.2& 48.3 & 45.6   &48.3 \\ 
                  &                  S-GCN &  56.9  &  75.3  & -&  - & 74.2&  33.7  &   46.2 & -   &  - &48.3 \\ \hline
\multirow{4}{*}{Coauthor Physics} &  S-BGCN-T-K & \textbf{91.3} & 87.6& 89.7& 61.8  & 89.8& \textbf{72.2} & 56.6 & 68.1& 25.9 & 67.9  \\
                  &     S-BGCN-T & 88.7 & 86.0& 87.9 & 70.2  & 87.8&    67.4 & 51.9 & 64.6& 29.4 & 62.4  \\
                  &                  S-BGCN &  89.1  &  87.1  &  89.5 & 78.3 &   89.5&  66.1  &  49.2&64.6 & 35.6   & 64.3 \\ 
                  &                  S-GCN &  89.1  &    87.0 & -   &  - & 89.4&  -66.2  &   49.2 & -   &  -  & 64.3 \\ \hline
\end{tabular}
\begin{tablenotes}\scriptsize
\centering
\item[*] Va.: Vacuity, Dis.: Dissonance, Al.: Aleatoric, Ep.: Epistemic, D.En.: Differential Entropy, En.: Entropy 
\end{tablenotes}
\vspace{2mm}
\label{Table: AUROC_AUPR:ood2}
\vspace{-5mm}
\end{table*}

\subsection{Experiment based on GAT model}
We also conducted the semi-supervised node classification based on GAT model~\cite{velickovic2018graph}).Model setup: The S-BGAT-T-K model has two dropout probabilities, which are a dropout on features  and a dropout on attention coefficients, as \feng{shown} in Table~\ref{table:BGAT-T}. We changed the dropout on attention coefficients to 0.4 at the test stage and set trade off parameters $\lambda=\min(1,t/50)$, using the same early stopping strategy ~\cite{velickovic2018graph}. The result are shown in Table~\ref{tab:classification_result gat}. 

\begin{table}[ht]
\centering
\caption{Hyper-parameters of S-BGAT-T-K model} 
\small
  \begin{tabular}{lccc}
    \toprule
     & \textbf{Cora} & \textbf{Citeseer}  & \textbf{Pubmed} \\
    \midrule
    \textbf{Hidden units}    &  64 & 64 & 64 \\
    \textbf{Learning rate}   &  0.01 & 0.01  & 0.01   \\
    \textbf{Dropout}  & 0.6/0.6 & 0.6/0.6 &  0.6/0.6   \\
    \textbf{$L_2$ reg.strength}  & 0.0005 & 0.0005 & 0.001   \\
    \textbf{Monte-Carlo samples}  & 100 & 100 &  100 \\
    \textbf{Max epoch}  &  100000 & 100000 & 100000    \\
    % \textbf{Early stop}  &  Yes & Yes & Yes   \\
    \bottomrule
  \end{tabular}
  \label{table:BGAT-T}
\end{table}

\begin{table}[th!]
% \captionsetup{font=footnotesize}
\centering

\caption{Semi-supervised node classification accuracy based on GAT}
\small
\label{tab:classification_result gat}
\begin{tabular}{l c c c c } 
% \footnotesize{
    \toprule
 & \textbf{Cora} & \textbf{Citeseer}  & \textbf{Pubmed}  \\
    \midrule
 \textbf{GAT} & 83.0 $\pm$ 0.7 & 72.5 $\pm$ 0.7 & 79.0 $\pm$ 0.3    \\
 \textbf{GAT-Drop} & 82.8 $\pm$ 0.8 & 72.6 $\pm$ 0.7 & 79.0 $\pm$ 0.3    \\
 \textbf{S-GAT} & 83.0 $\pm$ 0.7 & 72.6 $\pm$ 0.6 & 79.0 $\pm$ 0.3    \\
 \textbf{S-BGAT} & 82.9 $\pm$ 0.7 & 72.4 $\pm$ 0.7 & 78.9 $\pm$ 0.3    \\
 \textbf{S-BGAT-T}  & 83.7 $\pm$  0.6& \textbf{73.2 $\pm$ 0.5}& 79.1 $\pm$ 0.2 \\
 \textbf{S-BGAT-T-K}  & \textbf{83.8 $\pm$  0.7}& 73.0 $\pm$ 0.7& \textbf{79.1 $\pm$ 0.2} \\
    \bottomrule
    % }
\end{tabular}
\end{table}

\subsection{Misclassification Detection}
For Amazon Photo, Amazon Computer and Coauthor Physics dataset, the misclassification detection results are shown in Tabel~\ref{AUPR:uncertainty2}.
\begin{table*}[th!]
% \tiny
\scriptsize
\caption{AUROC and AUPR for the Misclassification Detection.}
\vspace{-2mm}
\centering
\begin{tabular}{c||c|ccccc|ccccc|c}
\hline
\multirow{2}{*}{Data} & \multirow{2}{*}{Model} & \multicolumn{5}{c|}{AUROC} & \multicolumn{5}{c|}{AUPR} & \multirow{2}{*}{Acc} \\  
                  &                   & Va.\tnote{*}& Dis. & Al. & Ep. & En. & Va. & Dis. &  Al. & Ep. &En. & \\ \hline
\multirow{5}{*}{Amazon Photo} &  S-BGCN-T-K & 66.0 & \textbf{89.3}& 83.0& 83.4 &  83.2&   95.4 & \textbf{98.9}& 98.4& 98.1  & 98.4 & \textbf{92.0} \\
                  &  EDL-GCN &  65.1  &  88.5  & -   &  - &  82.2  &94.6 &  98.1 &  -  & - &  98.0 & 91.2 \\    
                  &    DPN-GCN &  -  &  -  & 81.8   &  80.8 &  81.3  & -  &  -  &  98.1  & 98.0 &98.0 & \textbf{92.0} \\  
                  &                  Drop-GCN &  -  &  -  &  84.5 & 84.4 &   84.6&  -  &  -& 98.2 & 98.1 &98.2 & 91.3 \\ 
                  &                  GCN &  -  &  -  &   - & -   &   86.8&  -  &   - & -   &  -  &98.5 & 91.2 \\ \hline
\multirow{5}{*}{Amazon Computer} &  S-BGCN-T-K & 65.0 & \textbf{87.8}& 83.3& 79.6  & 83.6& 89.4 & \textbf{96.3} & 95.0& 94.2 & 95.0 & 84.0 \\
                  &  EDL-GCN &  64.1  &  86.5  & -   &  - &  82.2  &93.6 &  97.1 &  -  & - &  97.0 & 79.7 \\    
                  &    DPN-GCN &  -  &  -  & 76.8   &  76.0 &  76.3  & -  &  -  &  94.5  & 94.3 &94.4 & \textbf{84.8} \\  
                  &                  Drop-GCN &  -  &  -  &  79.1 & 75.9 &   79.2&  -  &  -& 95.1 & 94.5   &  95.1 & 79.6 \\ 
                  &                  GCN &  -  &  -  &   - & -   &   81.7&  -  &   - & -   &  -  &95.4  & 82.6\\ \hline
\multirow{5}{*}{Coauthor Physics} &  S-BGCN-T-K & 80.2 & \textbf{91.4}& 87.5& 81.7  & 87.6& 98.3 &\textbf{99.4} & 99.0& 98.4 & 98.9 & \textbf{93.0}  \\
                  &  EDL-GCN &  78.8  &  89.5  & -   &  - &  86.2  &96.6 &  97.2 &  -  & - &  97.0 & 92.7 \\    
                  &    DPN-GCN &  -  &  -  & 87.0   &  86.4 &  86.8  & -  &  -  &  99.1  & 99.0 &99.0 & 92.5 \\  
                  &                  Drop-GCN &  -  &  -  &  87.6 & 84.1 &   87.7&  -  &  -& 98.9 & 98.6   &  98.9 & 93.0 \\ 
                  &                  GCN &  -  &  -  &   - & -   &   88.7&  -  &   - & -   &  - &99.0 & 92.8 \\ \hline
\end{tabular}
\begin{tablenotes}\scriptsize
\centering
\item[*] Va.: Vacuity, Dis.: Dissonance, Al.: Aleatoric, Ep.: Epistemic, En.: Entropy 
\end{tablenotes}
\vspace{2mm}
\label{AUPR:uncertainty2}
\vspace{-5mm}
\end{table*}

\subsection{Graph Embedding Representations of Different Uncertainty Types} \label{subsec:uncertain-dist}

\begin{figure*}[th!]
  \centering
  \includegraphics[width=0.7\linewidth, height=0.4\textwidth]{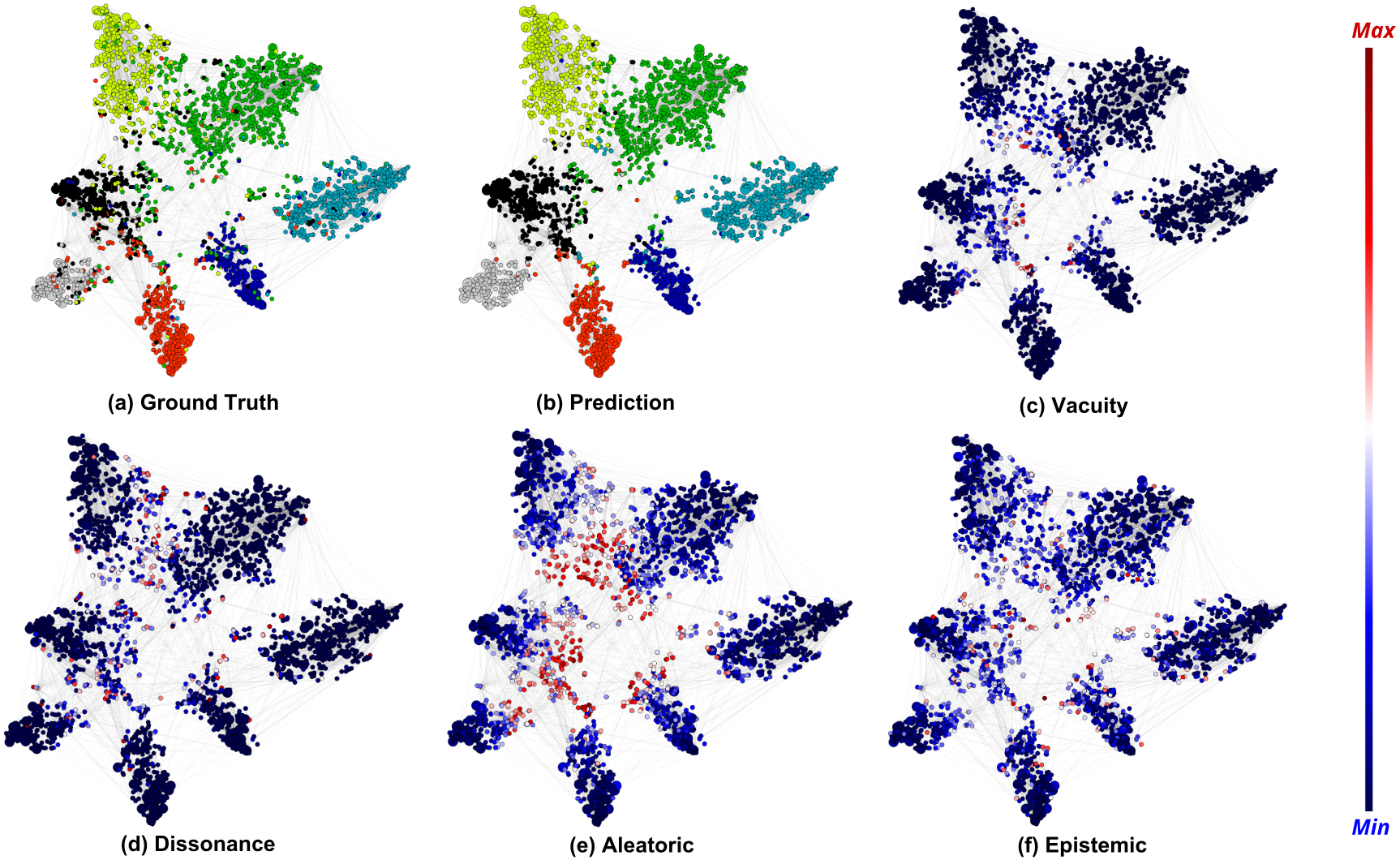}
  \small{
  \caption{Graph embedding representations of the Cora dataset for classes and the extent of uncertainty: (a) shows the representation of seven different classes; (b) shows our model prediction; and (c)-(f) present the extent of uncertainty for respective uncertainty types, including vacuity, dissonance, aleatoric, epistemic.}
  \label{fig:un_vis_cora}
\vspace{-3mm}
 }
\end{figure*}

\begin{figure*}[th!]
  \centering
  \includegraphics[width=0.7\linewidth, height=0.4\textwidth]{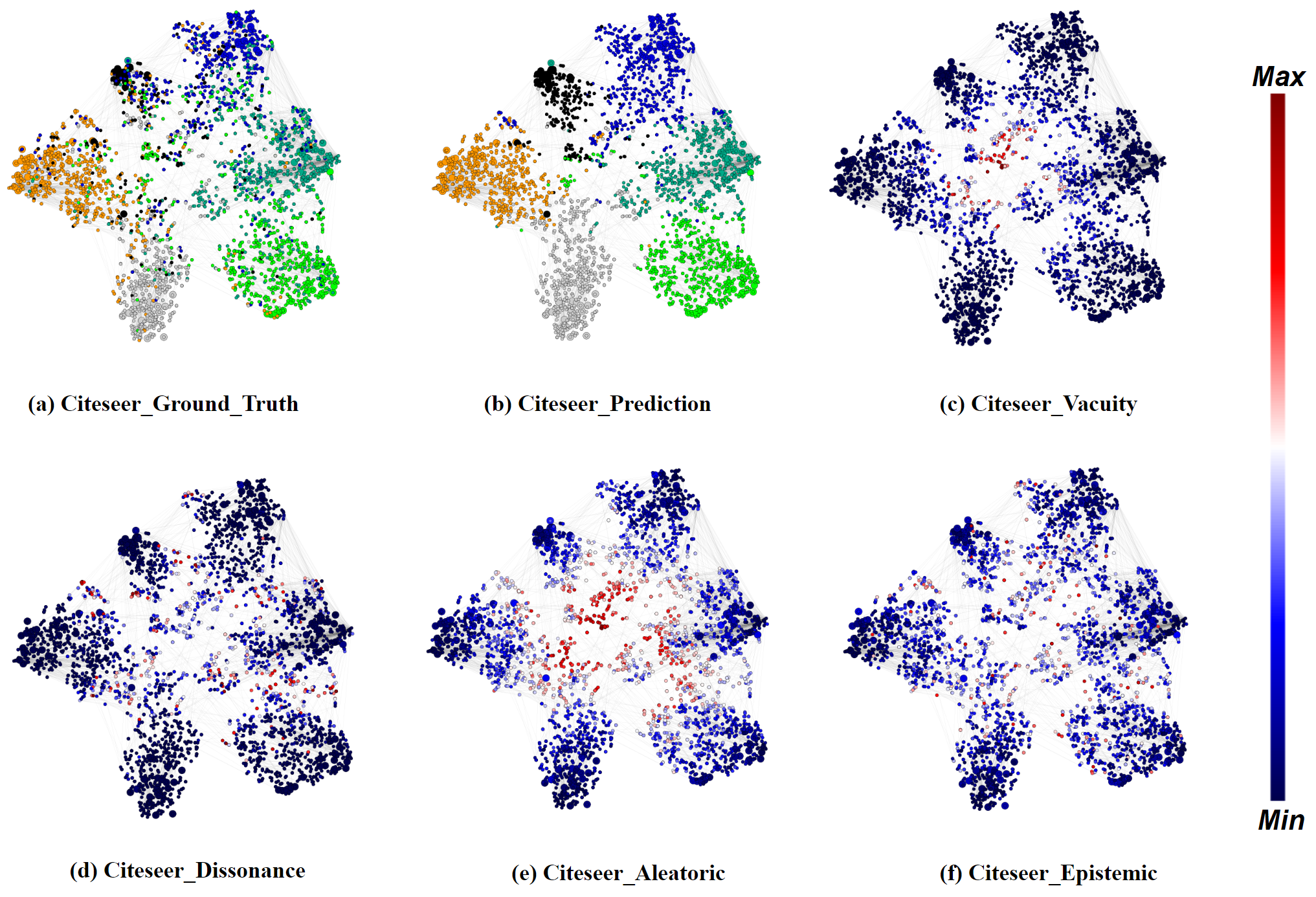}
  \small{
  \caption{Graph embedding representations of the Citeseer dataset for classes and the extent of uncertainty: (a) shows the representation of seven different classes, (b) shows our model prediction and (c)-(f) present the extent of uncertainty for respective uncertainty types, including vacuity, dissonance, and aleatoric uncertainty, respectively.}
  \label{fig:un_vis}
\vspace{-3mm}
 }
\end{figure*}

To better understand different uncertainty types, we used $t$-SNE ($t$-Distributed Stochastic Neighbor Embedding~\cite{maaten2008visualizing}) to represent the computed feature representations of a pre-trained BGCN-T model's first hidden layer on the Cora dataset and the Citeseer dataset. 

\noindent {\bf Seven Classes on Cora Dataset}: In Figure~\ref{fig:un_vis_cora}, (a) shows the representation of seven different classes, (b) shows our model prediction and (c)-(f) present the extent of uncertainty for respective uncertainty types, including vacuity, dissonance, and aleatoric uncertainty, respectively.

\noindent {\bf Six Classes on Citeseer Dataset}: In Figure~\ref{fig:un_vis} (a), a node's color denotes a class on the Citeseer dataset where 6 different classes are shown in different colors. Figure~\ref{fig:un_vis} (b) is our prediction result.

\noindent {\bf Eight Classes on Amazon Photo Dataset}: In Figure~\ref{fig:Ab_GKDE}, a node's color denotes vacuity uncertainty value, and the big node represent training node. These results are based on OOD detection experiment. Compare Figure~\ref{fig:Ab_GKDE} (a) and (b), we found that GKDE can indeed improve the OOD detection.

For Figures~\ref{fig:un_vis} (c)-(f), the extent of uncertainty is presented where a blue color refers to the lowest uncertainty (i.e., minimum uncertainty) while a red color indicates the highest uncertainty (i.e., maximum uncertainty) based on the presented color bar. To examine the trends of the extent of uncertainty depending on either training nodes or test nodes, we draw training nodes as bigger circles than test nodes.  Overall we notice that most training nodes (shown as bigger circles) have low uncertainty (i.e., blue), which is reasonable because the training nodes are the ones that are already observed. Now we discuss the extent of uncertainty under each uncertainty type. 

\vspace{1mm}
\noindent {\bf Vacuity}: In Figure~\ref{fig:Ab_GKDE} (b), most training nodes show low uncertainty, we observe majority of OOD nodes in the button cluster show high uncertainty as appeared in red. 

\vspace{1mm}
\noindent {\bf Dissonance}: In Figure~\ref{fig:un_vis} (d), similar to vacuity, training nodes have low uncertainty. But unlike vacuity, test nodes are much less uncertain. Recall that dissonance represents the degree of conflicting evidence (i.e., discrepancy between each class probability). However, in this dataset, we observe a fairly low level of dissonance and the obvious outperformance of Dissonance in node classification prediction.

\vspace{1mm}
\noindent {\bf Aleatoric uncertainty}: In Figure~\ref{fig:un_vis} (e), a lot of nodes show high uncertainty with larger than 0.5 except a small amount of training nodes with low uncertainty.

\vspace{1mm}
\noindent {\bf Epistemic uncertainty}: In Figure~\ref{fig:un_vis} (f), most nodes show very low epistemic uncertainty \feng{values} because uncertainty derived from model parameters can disappear as they are trained well.

\begin{figure*}[t!]
    \centering
    \begin{subfigure}[b]{0.44\textwidth}
        \centering
        \includegraphics[width=\linewidth]{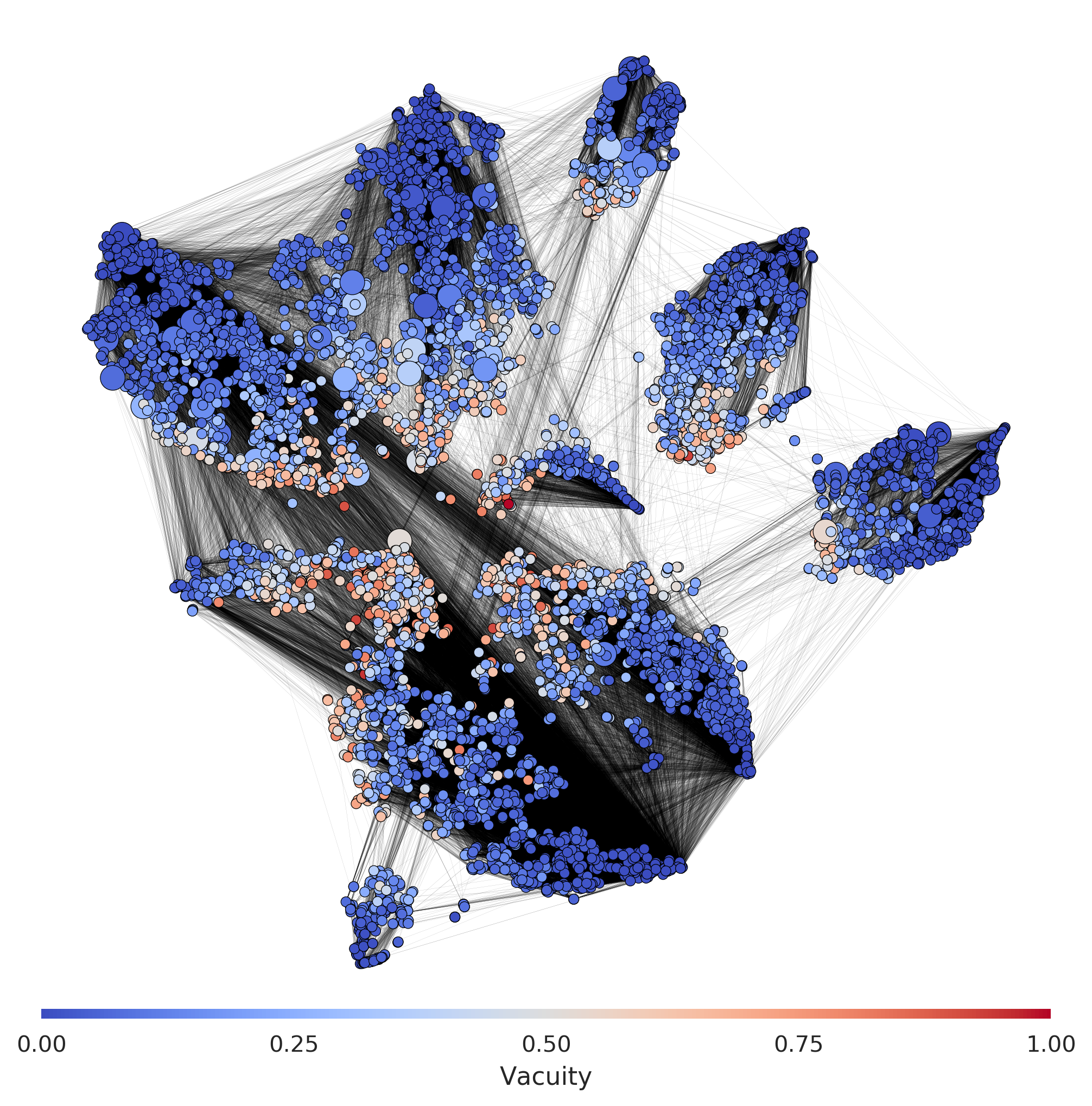}
        \caption{S-BGCN-T}
    \end{subfigure}
    % \hfill
    \begin{subfigure}[b]{0.44\textwidth}
        \centering
        \includegraphics[width=\linewidth]{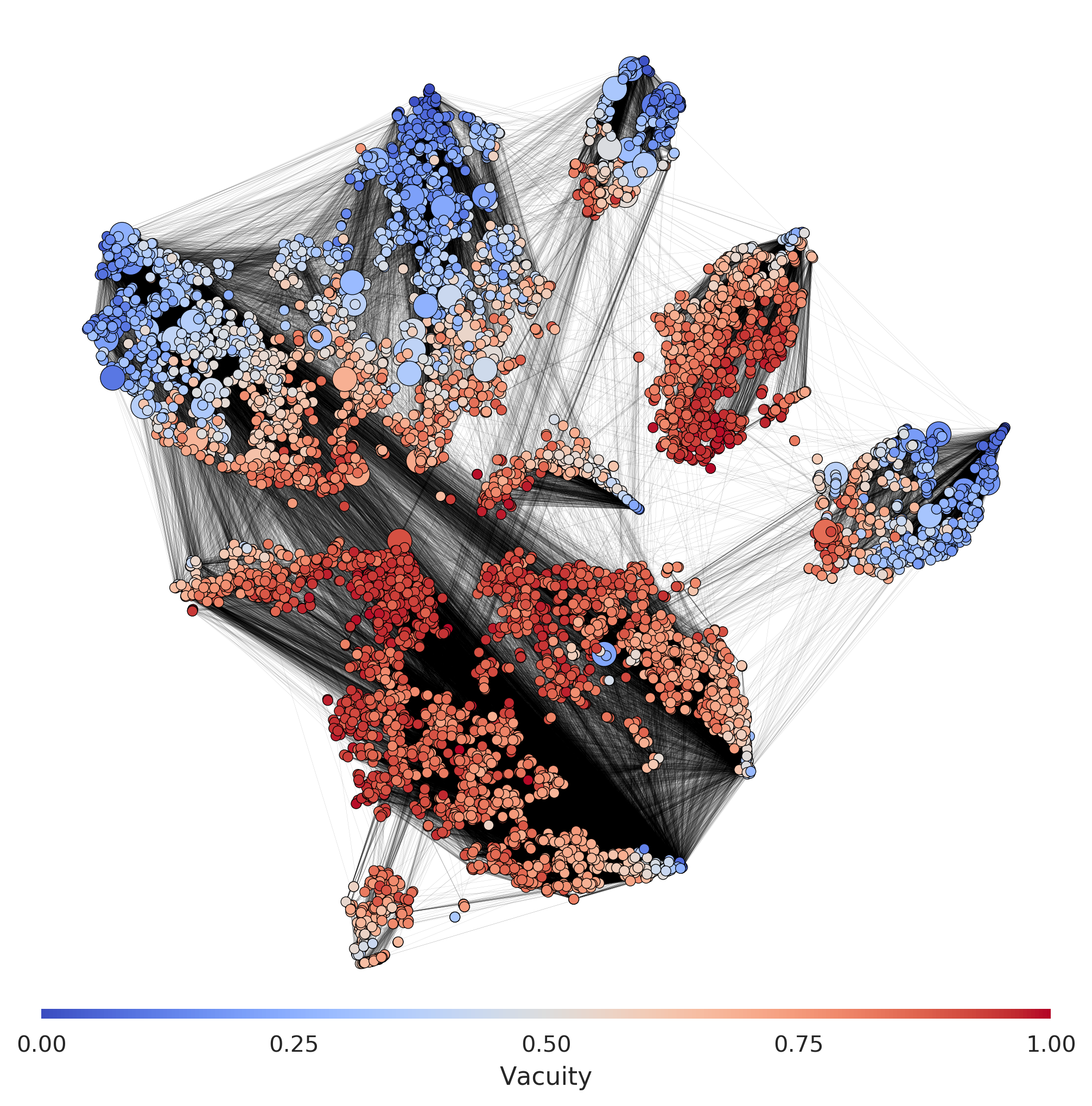}
        \caption{S-BGCN-T-K}
    \end{subfigure}
    \small{
    \caption{Graph embedding representations of the Amazon Photo dataset for the extent of vacuity uncertainty based on OOD detection experiment.}
    \label{fig:Ab_GKDE}
    }
\end{figure*}

\begin{figure*}[t!]
    \centering
    \begin{subfigure}[b]{0.32\textwidth}
        \centering
        \includegraphics[width=\linewidth]{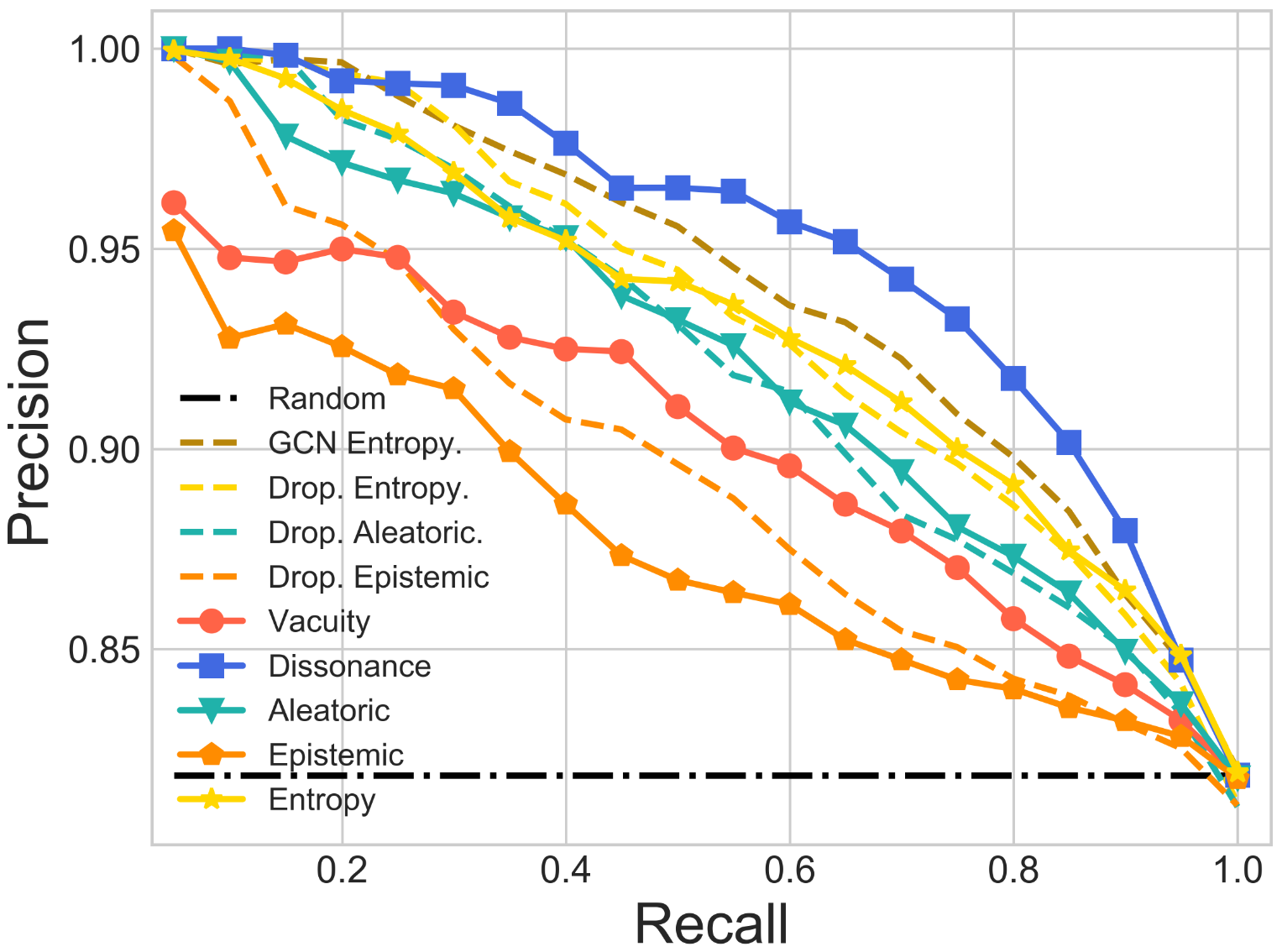}
        \caption{PR curves on Cora}
    \end{subfigure}
    \hfill
    \begin{subfigure}[b]{0.32\textwidth}
        \centering
        \includegraphics[width=\linewidth]{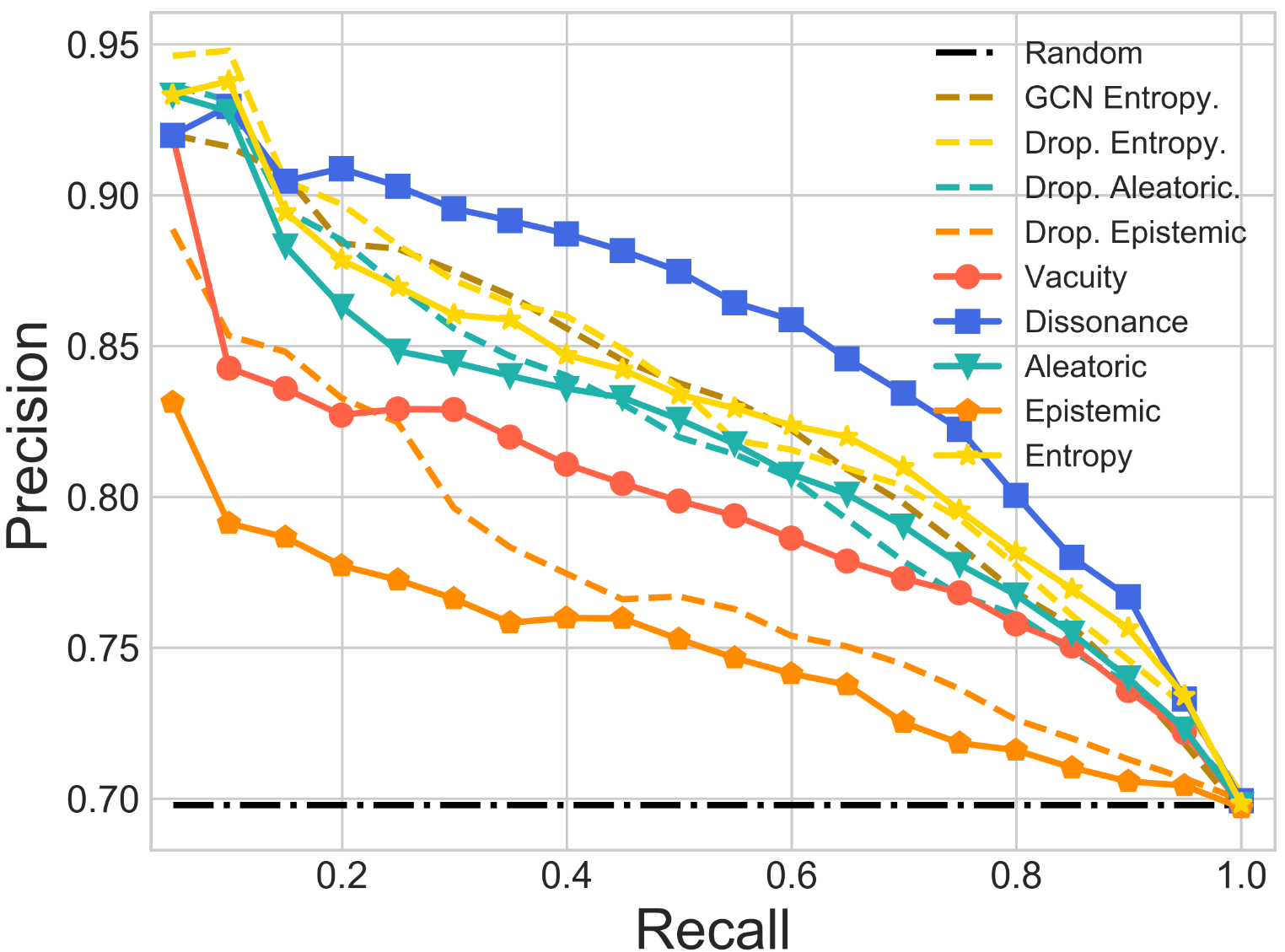}
        \caption{PR curves on Citeseer}
    \end{subfigure}
    \hfill
    \begin{subfigure}[b]{0.32\textwidth}
        \centering
        \includegraphics[width=\linewidth]{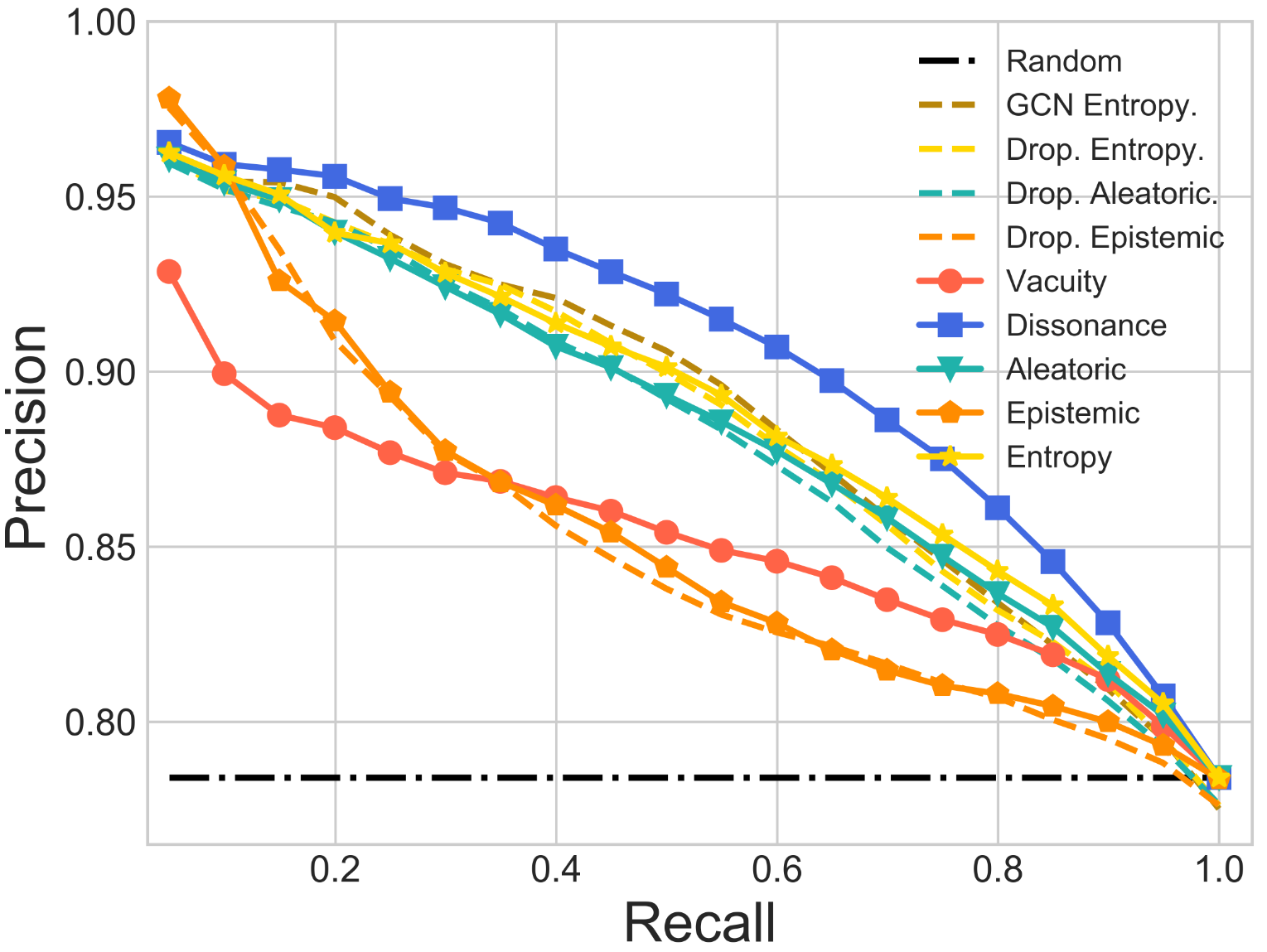}
        \caption{PR curves on Pubmed}
    \end{subfigure}
    \small{
    \caption{PR curves of misclassification detection for S-BGCN-T-K and other baselines, GCN-Drop and GCN.}
    \label{fig:pr_all}
    }
\end{figure*}

\begin{figure*}[t!]
    \centering
    \begin{subfigure}[b]{0.32\textwidth}
        \centering
        \includegraphics[width=\linewidth]{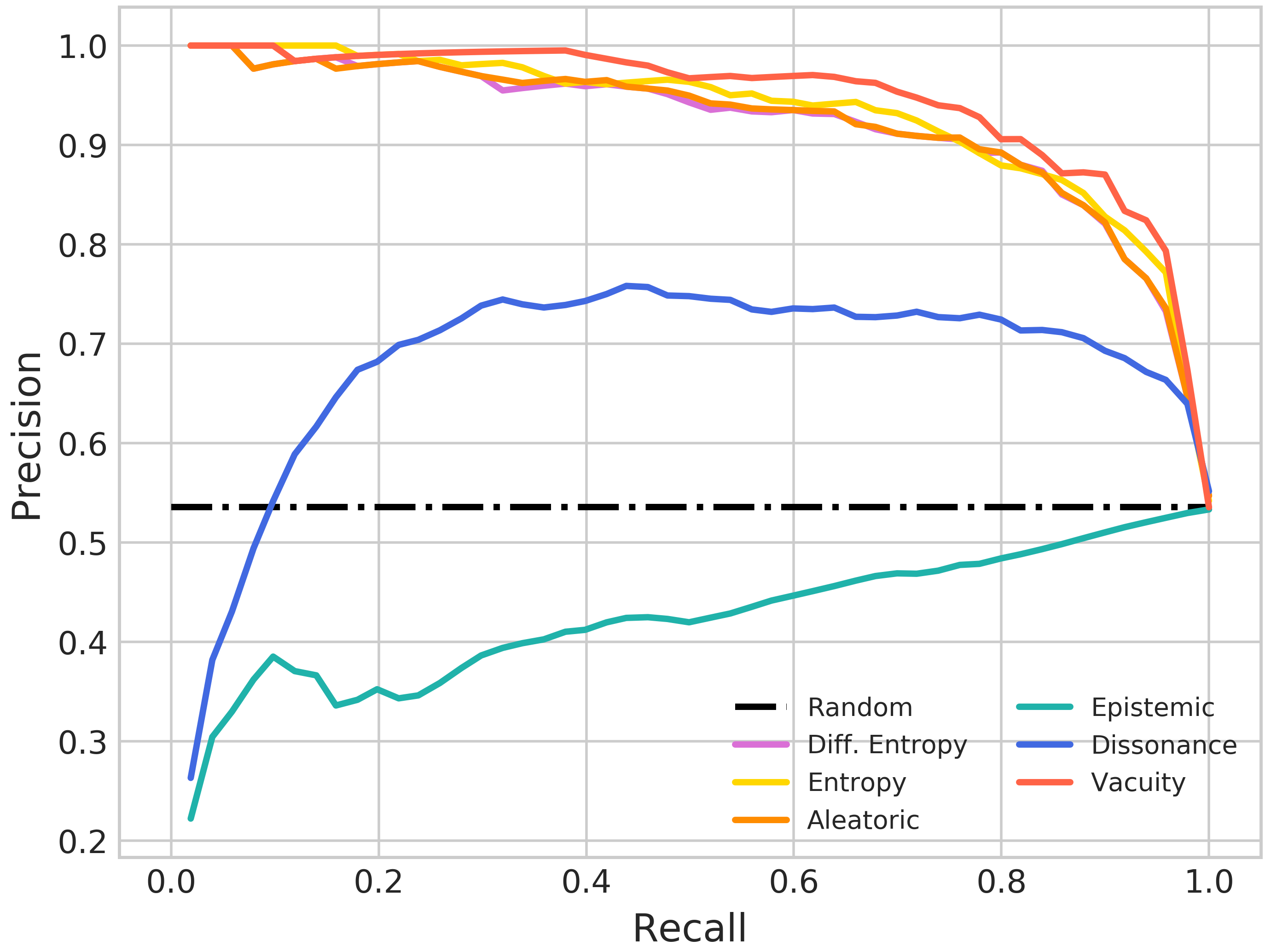}
        \caption{ Amazon Photo}
    \end{subfigure}
    \hfill
    \begin{subfigure}[b]{0.32\textwidth}
        \centering
        \includegraphics[width=\linewidth]{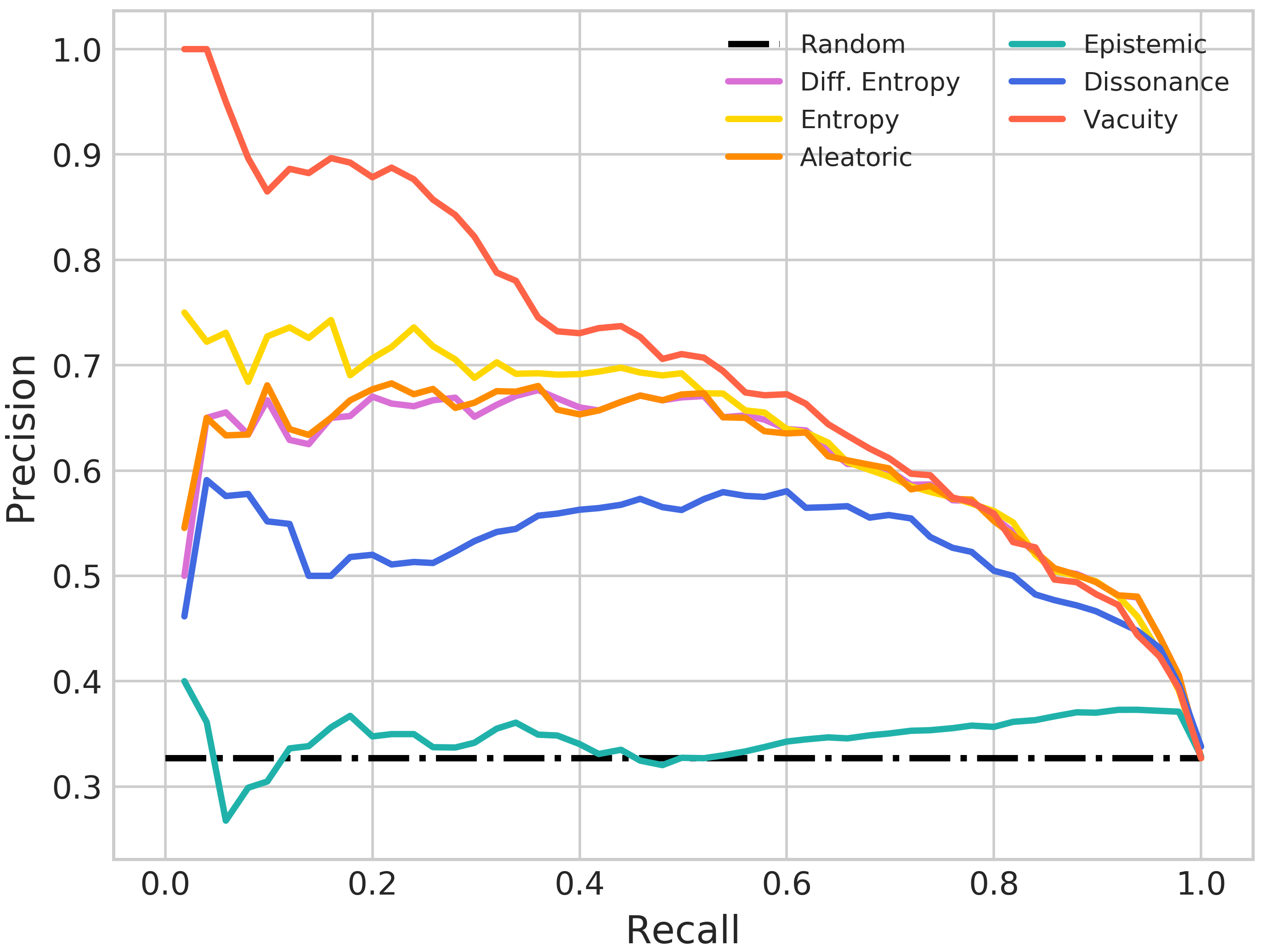}
        \caption{Amazon Computers}
    \end{subfigure}
    \hfill
    \begin{subfigure}[b]{0.32\textwidth}
        \centering
        \includegraphics[width=\linewidth]{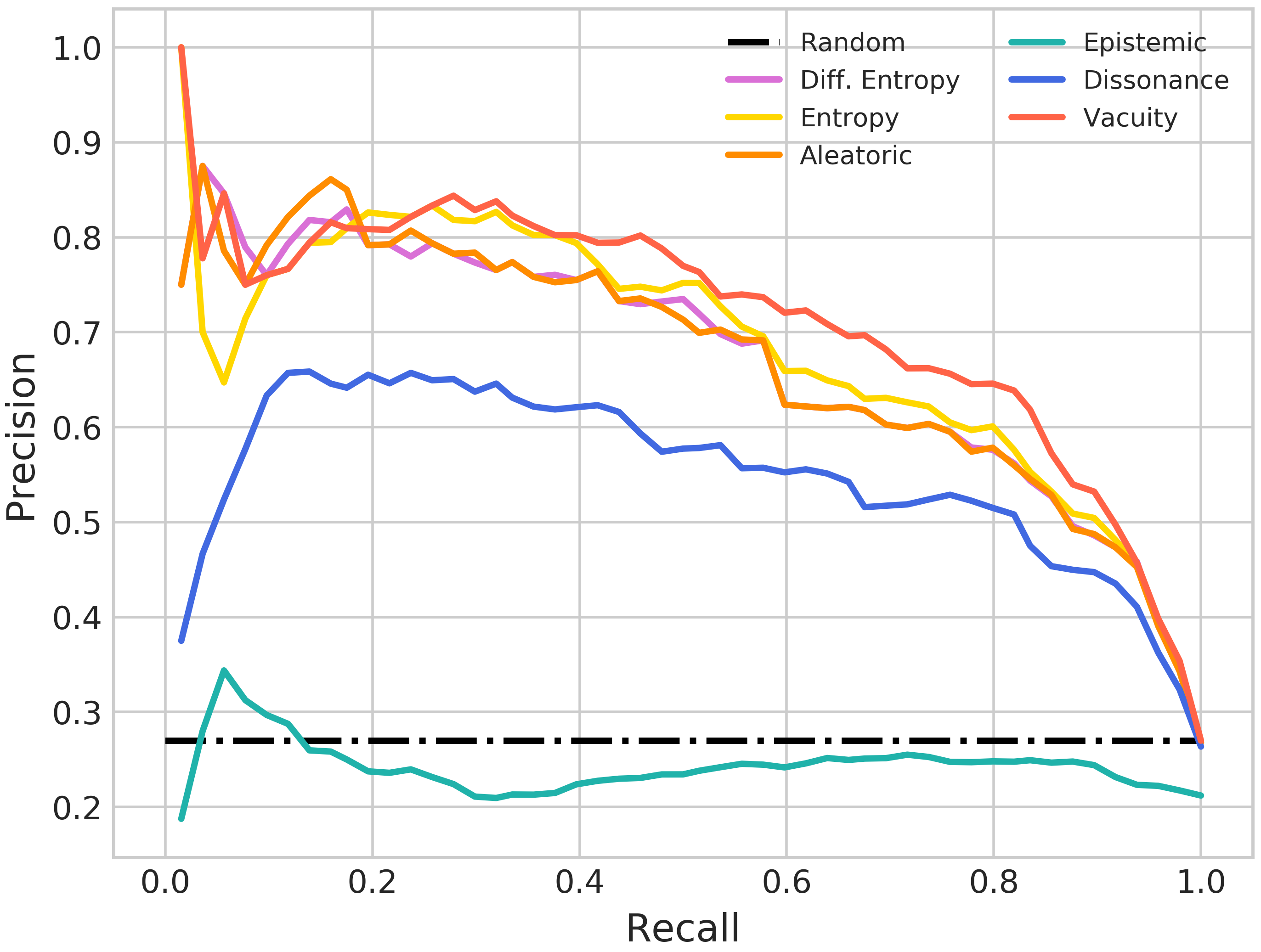}
        \caption{Coauthor Physics}
    \end{subfigure}
    \small{
    \caption{PR cuves of OOD detection for S-BGCN-T-K with uncertainties.}
    \label{fig:aupr_ood}
    }
\end{figure*}

\begin{figure*}[t!]
    \centering
    \begin{subfigure}[b]{0.32\textwidth}
        \centering
        \includegraphics[width=\linewidth]{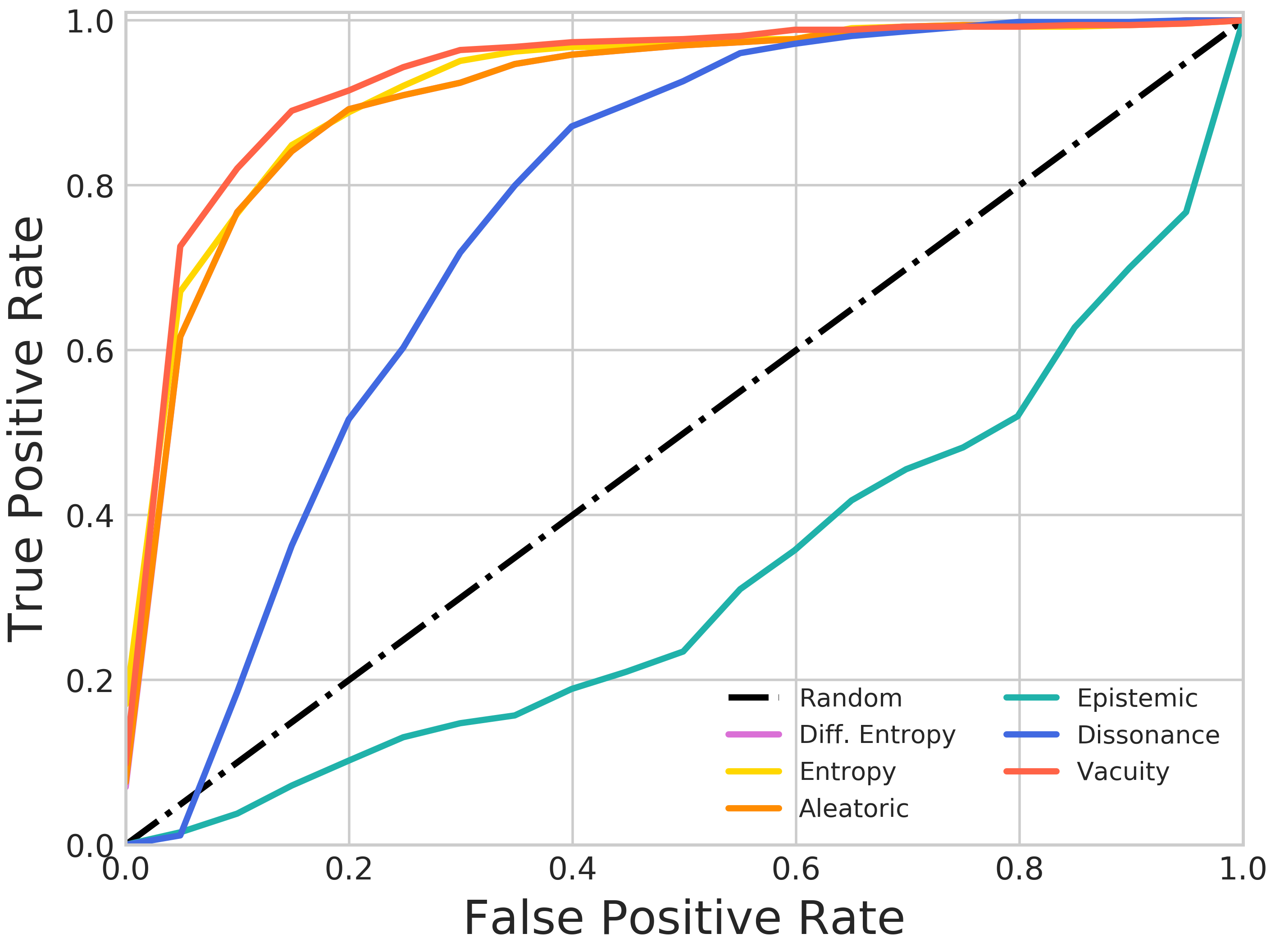}
        \caption{Amazon Photo}
        \label{fig:auroc_ood:a}
    \end{subfigure}
    \hfill
    \begin{subfigure}[b]{0.32\textwidth}
        \centering
        \includegraphics[width=\linewidth]{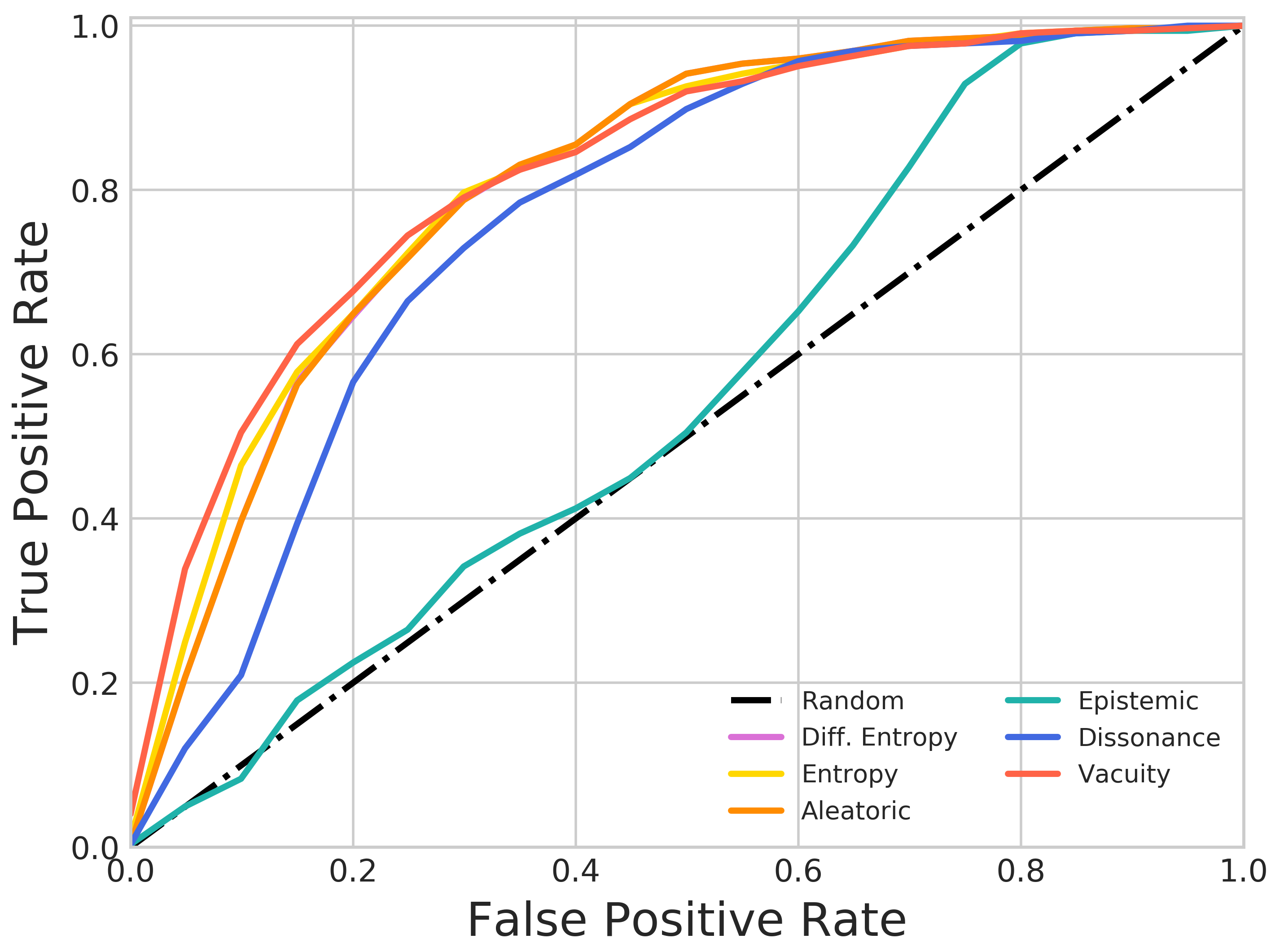}
        \caption{Amazon Computers}
        \label{fig:auroc_ood:b}
    \end{subfigure}
    \hfill
    \begin{subfigure}[b]{0.32\textwidth}
        \centering
        \includegraphics[width=\linewidth]{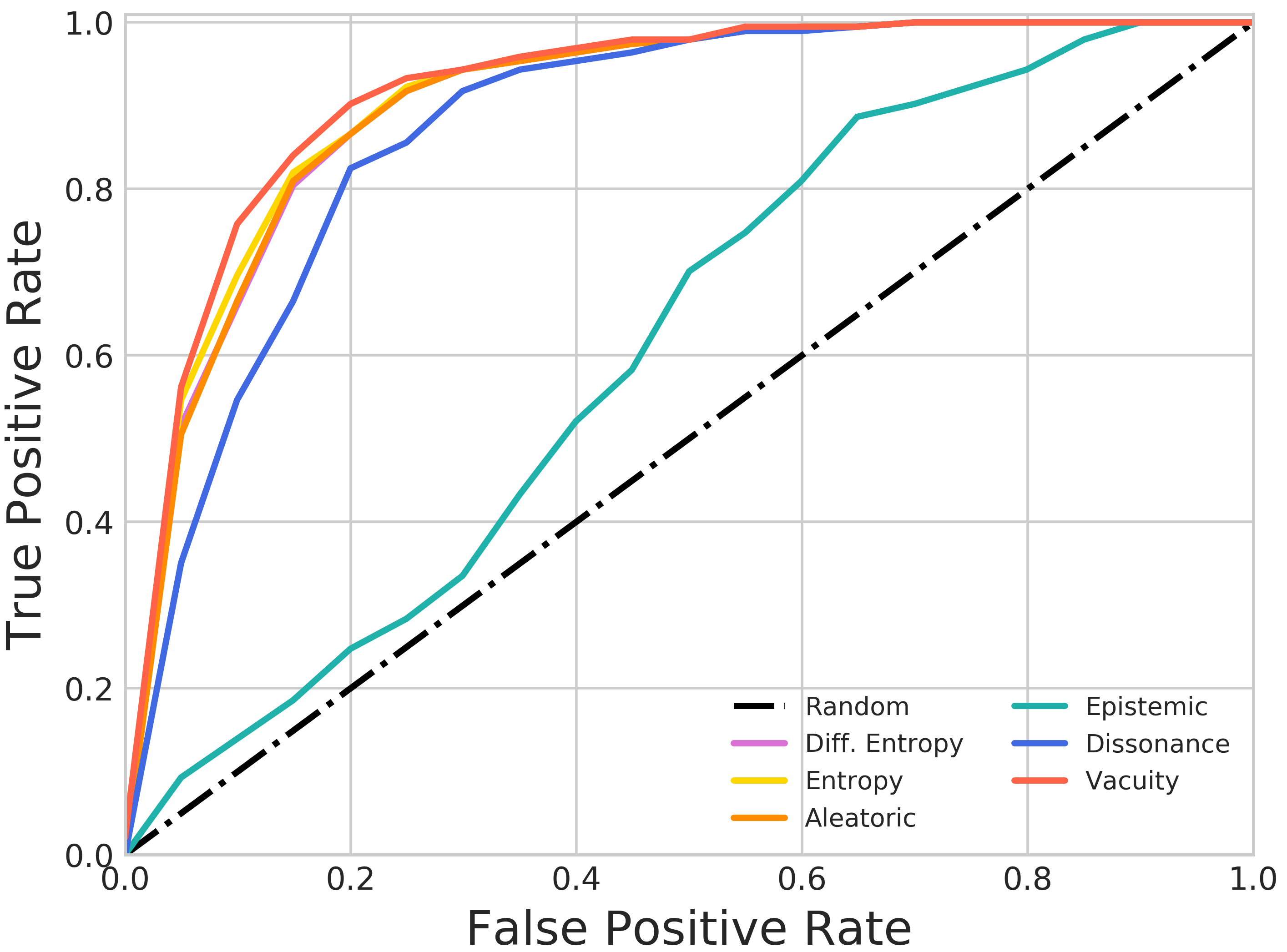}
        \caption{Coauthor Physics}
        \label{fig:auroc_ood:c}
    \end{subfigure}
    \small{
    \caption{ROC curves of OOD detection for S-BGCN-T-K with uncertainties.}
    \label{fig:auroc_ood}
}
\end{figure*}

\subsection{PR and ROC Curves}
\noindent {\bf AUPR for the OOD Detection}: Figure~\ref{fig:aupr_ood} shows the AUPRC for the OOD detection when S-BGCN-T-K is used to detect OOD in which test nodes are considered based on their high uncertainty level, given a different uncertainty type, such as vacuity, dissonance, aleatoric, epistemic, or entropy (or total uncertainty). Also to check the performance of the proposed models with a baseline model, we added S-BGCN-T-K with test nodes randomly selected (i.e., Random).

Obviously, in Random baseline, precision was not sensitive to increasing recall while in S-BGCN-T-K (with test nodes being selected based on high uncertainty) precision decreases as recall increases. But although most S-BGCN-T-K models with various uncertainty types used to select test nodes shows sensitive precision to increasing recall (i.e., proving uncertainty being an indicator of improving OOD detection). In addition, unlike AUPR in misclassification detection, which showed the best performance in S-BGCN-T-K Dissonance (see Figure~\ref{fig:pr_all}), S-BGCN-T-K Dissonance showed the second worst performance among the proposed S-BGCN-T-K models with other uncertainty types. This means that less conflicting information does not help OOD detection. On the other hand, overall we observe{\feng d} Vacuity performs the best among all. From this finding, we can claim that to improve OOD detection, less information with \feng{a} high vacuity value can help boost the accuracy of the OOD detection. 

\noindent {\bf AUROC for the OOD Detection}: First, we investigated the performance of our proposed S-BGCN-T-K models when test nodes are selected based on seven different criteria (i.e., uncertainty measures). For AUROC in Figure~\ref{fig:auroc_ood}, we observed much better performance in most S-BGCN-T-K models with all uncertainty types except epistemic uncertainty. 

\subsection{Analysis for Epistemic Uncertainty in OOD Detection}
\vspace{-1mm}
Although epistemic uncertainty is known to be effective to improve OOD detection~\cite{gal2016dropout, kendall2017uncertainties} in computer vision applications, our results demonstrate it is less effective than our vacuity-based approach. One potential reason is that the previous success of epistemic in computer vision applications are only applied in supervised learning, but they are not sufficiently validated in semi-supervised learning. 

To back up our conclusion, designe a image classification experiment 
based on MC-Drop\cite{gal2016dropout} method to do the following experiment: 1) supervised learning on MNIST dataset with 50 labeled images; 2) semi-supervised learning (SSL) on MNIST dataset with 50 labeled images and 49950 unlabeled images, while there are 50\% OOD images (24975 FashionMNIST images) in unlabeled set. For both experiment, we test the epistemic uncertainty on 49950 unlabeled set (50\% In-distribution (ID) images and 50\% OOD images). We conduct the experiment the experiment based on three popular SSL methods, VAT~\cite{miyato2018virtual}, Mean Teacher~\cite{tarvainen2017mean} and pseudo label~\cite{lee2013pseudo}.
\begin{table*}[ht]
\caption{Epistemic uncertainty for semi-supervised image classification.}
\centering
\small
  \begin{tabular}{lcccc}
    \toprule
    Epistemic & \textbf{Supervised } & \textbf{VAT}  & \textbf{Mean Teacher} & \textbf{Pseudo Label}\\
    \midrule
    \textbf{In-Distribution} & 0.140 &  \textbf{0.116}& \textbf{0.105} & \textbf{0.041}  \\
    \textbf{Out-of-Distribution}  & \textbf{0.249} & 0.049 & 0.076 & 0.020 \\
    \bottomrule
  \end{tabular}
\label{tab:ssl}
\end{table*}
Table~\ref{tab:ssl} shows the average epistemic uncertainty value for in-distribution samples and OOD samples. The result shows the same pattern with~\cite{kendall2017uncertainties, kendall2015bayesian} in a supervised setting, but an opposite pattern in a semi-supervised setting that low epistemic of OOD samples, which is less effective top detect OOD.
Note that the SSL setting is similar to our semi-supervised node classification setting, which feed the unlabeled sample to train the model.

\subsection{Compare with Bayesian GCN baseline}
Compare with a (Bayesian) GCN baseline, Dropout+DropEdge~\cite{rong2019dropedge}. As shown in the table~\ref{AUPR:dropedge} below, our proposed method performed better than Dropout+DropEdge on the Cora and Citeer datasets for misclassificaiton detection. A similar trend was observed for OOD detection.

\begin{table*}[ht!]
\vspace{-0.5mm}   
\small
\caption{Compare with DropEdge on Misclassification Detection .}
\vspace{-2mm}
\centering
\begin{tabular}{c||c|ccccc|ccccc}
\hline
\multirow{2}{*}{Dataset} & \multirow{2}{*}{Model} & \multicolumn{5}{c|}{AUROC} & \multicolumn{5}{c}{AUPR}  \\  
                  &                   & Va.\tnote{*}& Dis. & Al. & Ep. & En. & Va. & Dis. &  Al. & Ep. &En. \\ \hline
\multirow{2}{*}{Cora} & S-BGCN-T-K & 70.6 & \textbf{82.4} & 75.3 & 68.8 & 77.7&  90.3  & \textbf{95.4} & 92.4 & 87.8 &93.4 \\ 
                  & DropEdge &  -  &  -  & 76.6   &  56.1  & 76.6  & -  &  -  &  93.2  & 85.4 &93.2 \\ 
                  \hline
\multirow{2}{*}{Citeseer} &  S-BGCN-T-K & 65.4& \textbf{74.0}& 67.2& 60.7&  70.0& 79.8 & \textbf{85.6} & 82.2 & 75.2 &83.5  \\ 
                  &                  DropEdge &   - &  -  & 71.1 & 51.2 &   71.1 &   -  & -& 84.0 & 70.3  &84.0   \\ 
                  \hline
\end{tabular}
\begin{tablenotes}\scriptsize
\centering
\item[*] Va.: Vacuity, Dis.: Dissonance, Al.: Aleatoric, Ep.: Epistemic, En.: Entropy 
\end{tablenotes}
\vspace{2mm}
\label{AUPR:dropedge}
\vspace{-5mm}
\end{table*}

\section{Derivations for Joint Probability and KL Divergence}

\subsection{Joint Probability}
At the test stage, we infer the joint probability by:
\vspace{-3mm}
\begin{eqnarray}
p(\y |A, \rr; \mathcal{G}) &=& \int \int \text{Prob}(\y | \p) \text{Prob}(\p | A, \rr; \bm{\theta} ) \text{Prob}(\bm{\theta} | \mathcal{G}) d \p d\bm{\theta} \nonumber\\
&\approx& \int \int \text{Prob}(\y | \p) \text{Prob}(\p | A, \rr; \bm{\theta} ) q(\bm{\theta}) d \p d\theta  \nonumber\\
&\approx& \frac{1}{M}\sum_{m=1}^M \int \text{Prob}(\y | \p) \text{Prob}(\p | A, \rr;\bm{\theta}^{(m)} ) d \p, \quad  \bm{\theta}^{(m)} \sim q( \bm{\theta}) \nonumber \\
&\approx& \frac{1}{M}\sum_{m=1}^M \int \sum_{i=1}^N \text{Prob}(\y_i | \p_i) \text{Prob}(\p_i | A, \rr;\bm{\theta}^{(m)} ) d \p_i, \quad  \bm{\theta}^{(m)} \sim q( \bm{\theta}) \nonumber \\
&\approx& \frac{1}{M}\sum_{m=1}^M  \sum_{i=1}^N \int \text{Prob}(\y_i | \p_i) \text{Prob}(\p_i | A, \rr;\bm{\theta}^{(m)} ) d \p_i, \quad \bm{\theta}^{(m)} \sim q( \bm{\theta}) \nonumber \\
&\approx& \frac{1}{M}\sum_{m=1}^M  \prod_{i=1}^N \int \text{Prob}(\y_i | \p_i)  \text{Dir}(\p_i|\bm{\alpha}_i^{(m)}) d \p_i,\quad  \bm{\alpha}^{(m)} = f(A, \rr, \bm{\theta}^{(m)}),q\quad \bm{\theta}^{(m)} \sim q( \bm{\theta}), \nonumber 
\end{eqnarray}
where the posterior over class label $p$ will be given by the mean of the Dirichlet:
\begin{eqnarray}
\text{Prob}(y_i = p | \bm{\theta}^{(m)}) = \int \text{Prob}(y_i =p | \p_i) \text{Prob}(\p_i | A, \rr;\bm{\theta}^{(m)} ) d \p_i = \frac{\alpha_{ip}^{(m)}}{\sum_{k=1}^K \alpha_{ik}^{(m)}}. \nonumber
\end{eqnarray}
The probabilistic form for a specific node $i$ by using marginal probability,
\begin{eqnarray}
\text{Prob}(\y_i | A, \rr; \mathcal{G}) &=& \sum_{y\setminus y_i} \text{Prob}(\y | A, \rr; \mathcal{G})  \nonumber \\
&=& \sum_{y\setminus y_i} \int \int \prod_{j=1}^N\text{Prob}(\y_j | \p_j) \text{Prob}(\p_j | A, \rr; \bm{\theta} ) \text{Prob}(\bm{\theta} | \mathcal{G}) d \p d\bm{\theta} \nonumber \\
&\approx& \sum_{y\setminus y_i} \int \int \prod_{j=1}^N\text{Prob}(\y_j | \p_j) \text{Prob}(\p_j | A, \rr; \bm{\theta} ) q(\bm{\theta})d \p d\bm{\theta} \nonumber \\
&\approx&\sum_{m=1}^M \sum_{y\setminus y_i} \int \prod_{j=1}^N\text{Prob}(\y_j | \p_j) \text{Prob}(\p_j | A, \rr; \bm{\theta}^{(m)} )  d \p,\quad \bm{\theta}^{(m)} \sim q( \bm{\theta})  \nonumber \\
&\approx& \sum_{m=1}^M \Big[\sum_{y\setminus y_i} \int \prod_{j=1}^N\text{Prob}(\y_j | \p_j) \text{Prob}(\p_j | A, \rr; \bm{\theta}^{(m)} )  d \p_j\Big], \quad \bm{\theta}^{(m)} \sim q( \bm{\theta})  \nonumber \\
&\approx& \sum_{m=1}^M  \Big[\sum_{y\setminus y_i}  \prod_{j=1, j\neq i}^N\text{Prob}(\y_j | A, \rr_j; \bm{\theta}^{(m)})   \Big] \text{Prob}(\y_i | A, \rr; \bm{\theta}^{(m)} ),\quad \bm{\theta}^{(m)} \sim q( \bm{\theta})  \nonumber \\
&\approx& \sum_{m=1}^M \int \text{Prob}(\y_i | \p_i) \text{Prob}(\p_i |A, \rr; \bm{\theta}^{(m)} ) d \p_i , \quad \bm{\theta}^{(m)} \sim q( \bm{\theta}).  \nonumber
\end{eqnarray}
To be specific, the probability of label $p$ is,
\begin{eqnarray}
\text{Prob}(y_i=p | A, \rr; \mathcal{G}) \approx \frac{1}{M} \sum_{m=1}^M \frac{\alpha_{ip}^{(m)}}{\sum_{k=1}^K \alpha_{ik}^{(m)}}, \quad  \bm{\alpha}^{(m)} = f(A, \rr, \bm{\theta}^{(m)}),\quad \bm{\theta}^{(m)} \sim q( \bm{\theta}). \nonumber
\end{eqnarray}
\subsection{KL-Divergence }
KL-divergence between $\text{Prob}({\bf y} | {\bf r}; \bm{\gamma},  \mathcal{G})$ and  $\text{Prob}({\bf y} | \hat{\p})$ is given by 
\begin{eqnarray}
\text{KL}[\text{Prob}(\y | A, \rr;\mathcal{G})|| \text{Prob}({\bf y} | \hat{\p}))]
&=& \mathbb{E}_{\text{Prob}(\y | A, \rr;\mathcal{G})}\Big[\log \frac{\text{Prob}(\y | A, \rr;\mathcal{G})} {\text{Prob}({\bf y} | \hat{\p})} \Big] \nonumber \\
&\approx&\mathbb{E}_{\text{Prob}(\y | A, \rr;\mathcal{G})} \Big[\log \frac{\prod_{i=1}^N  \text{Prob}(\y_i | A, \rr; \mathcal{G})}{\prod_{i=1}^N  \text{Prob}({\bf y} | \hat{\p})} \Big] \nonumber \\
&\approx&  \mathbb{E}_{\text{Prob}(\y | A, \rr;\mathcal{G})} \Big[\sum_{i=1}^N \log\frac{\text{Prob}(\y_i | A, \rr; \mathcal{G})}{\text{Prob}({\bf y} | \hat{\p})} \Big] \nonumber\\
&\approx& \sum_{i=1}^N \mathbb{E}_{\text{Prob}(\y | A, \rr;\mathcal{G})} \Big[ \log\frac{\text{Prob}(\y_i | A, \rr; \mathcal{G})}{\text{Prob}({\bf y} | \hat{\p})} \Big] \nonumber\\
&\approx& \sum_{i=1}^N \sum_{j=1}^K \text{Prob}(y_i=j | A, \rr; \mathcal{G}) \Big( \log\frac{\text{Prob}(y_i=j | A, \rr; \mathcal{G})}{\text{Prob}(y_i=j| \hat{\p})} \Big) \nonumber
\end{eqnarray}

The KL divergence between two Dirichlet distributions $\text{Dir}(\alpha)$ and $\text{Dir}(\hat{\alpha})$ can be obtained in closed form as,
\begin{eqnarray}
\text{KL}[\text{Dir}(\alpha)\| \text{Dir}(\hat{\alpha})] = \ln \Gamma(S) - \ln \Gamma(\hat{S}) + \sum_{c=1}^K \big(\ln \Gamma(\hat{\alpha}_c) - \ln \Gamma(\alpha_c) \big)   + \sum_{c=1}^K (\alpha_c - \hat{\alpha}_c)(\psi(\alpha_c) - \psi(S)),   \nonumber
\end{eqnarray}
where $S = \sum_{c=1}^K \alpha_c$ and $\hat{S} = \sum_{c=1}^K \hat{\alpha}_c$.

\end{document}